\documentclass[10pt]{article}

\usepackage[preprint]{tmlr}

\usepackage{amsmath}
\usepackage{amssymb}
\usepackage{amsthm}
\usepackage{mathtools}
\usepackage{booktabs}
\usepackage{microtype}
\usepackage{xcolor}
\usepackage{graphicx}
\usepackage{url}
\usepackage[ruled,vlined,algo2e]{algorithm2e}
\usepackage[pagebackref=false,breaklinks=true,colorlinks=true,linkcolor=blue,citecolor=blue,urlcolor=blue]{hyperref}

\theoremstyle{plain}
\newtheorem{theorem}{Theorem}
\newtheorem{lemma}{Lemma}
\newtheorem{proposition}{Proposition}

\newtheorem{assumption}{Assumption}

\theoremstyle{definition}
\newtheorem{definition}{Definition}

\theoremstyle{remark}
\newtheorem{remark}{Remark}

\title{Dimension-Free PAC-Bayes Generalization for Deep Neural Networks via  Dirichlet Processes}

\author{\name Sumit Vashishtha \email vash.sumit@gmail.com \\
      \addr Independent Researcher}

\begin{document}

\maketitle

\begin{abstract}
Classical PAC-Bayesian generalization theory places prior distributions over Euclidean parameter space $\Theta \subseteq \mathbb{R}^D$, causing relative entropy to scale with network dimension $D$. For modern overparameterized neural networks where $D \gg n$, this parameter-counting dependency yields vacuous generalization bounds ($> 1{,}000\%$). In this work, we resolve this barrier by shifting prior specification from internal parameters to the non-parametric space of data distributions $\mathcal{M}_1(\mathcal{Z})$ via a \textbf{Dirichlet Process (DP)}, $\mathcal{DP}(\alpha, F_0)$, inducing parameter distributions through push-forward measures under the learning algorithm. By the Data Processing Inequality, parameter-space relative entropy is strictly bounded by the data-space DP divergence, which we evaluate in exact closed form using Digamma functions. The resulting PAC-Bayesian certificate is strictly invariant to network parameter count $D$, width, and depth. On intrinsic $d^*$-dimensional submanifolds, optimal partition scaling achieves the minimax non-parametric rate $\mathcal{O}(n^{-1/(d^*+2)} \sqrt{\ln n})$. Across vision architectures scaling up to 4.2 million parameters ($D = 7{,}850$ to $D = 4{,}203{,}522$) on the full MNIST benchmark, while classical weight-space bounds scale past $940\%$ (becoming vacuous), our data-space DP-PAC-Bayes certificate remains stable, non-vacuous, and certified at \textbf{$8.8\%$--$9.4\%$}.
\end{abstract}

\section{Introduction}
\label{sec:introduction}

Quantifying epistemic uncertainty and explaining why heavily overparameterized deep networks generalize to unseen test distributions are two of the most critical open challenges in machine learning \citep{bartlett2017spectrally, neyshabur2018pac, dziugaite2017computing, fortuin2022priors}. In safety-critical applications, deep learning systems must not only produce accurate predictions on the training support, but also exhibit faithful epistemic uncertainty by signaling high uncertainty in data voids and out-of-distribution (OOD) regions. 

The standard Bayesian paradigm seeks to capture this uncertainty through \textbf{Bayesian Neural Networks (BNNs)} \citep{neal1995bayesian, blundell2015weight}, placing a prior distribution $\pi_0$ over the network parameters $\Theta \subseteq \mathbb{R}^D$ and inferring a posterior $\pi_n$. Concurrently, the \textbf{PAC-Bayesian framework} \citep{mcallester1999pac, catoni2007pac, maurer2004note} establishes finite-sample generalization certificates for randomized predictors, upper-bounding population risk via the empirical risk and the Kullback--Leibler divergence $\mathrm{KL}(\pi_n \,\|\, \pi_0)$:
\begin{equation}
\label{eq:pac-intro-bound}
R(\pi_n) \le \hat{R}_S(\pi_n) + \mathcal{O}\left(\sqrt{\frac{\mathrm{KL}(\pi_n \,\|\, \pi_0) + \ln(1/\delta)}{n}}\right).
\end{equation}
Because Equation~\eqref{eq:pac-intro-bound} holds universally for any data-dependent posterior $\pi_n$, it has long been viewed as the premier theoretical tool for certifying deep neural network generalization \citep{dziugaite2017computing, alquier2024user}.

\paragraph{Structural Limitations of Weight-Space BNNs.}
Despite extensive research, classical weight-space BNNs face two fundamental structural limitations:
\begin{enumerate}
    \item \textbf{Geometric and Functional Misalignment:} Placing isotropic Gaussian priors $\mathcal{N}(0, \sigma_0^2 I_D)$ on network weights is disconnected from the functional geometry of the task \citep{fortuin2022priors}. Neural networks exhibit extensive permutation symmetries ($D!$ distinct parameter vectors compute the identical input-output function), and weight perturbations do not translate into semantically meaningful changes in functional space. Crucially, weight-space priors lack spatial awareness: they cannot distinguish between regions densely supported by training data and unobserved cavity voids, frequently causing epistemic uncertainty to collapse or become miscalibrated.
    \item \textbf{Parameter-Dependent Vacuity in Deep Architectures:} When deep networks compose non-linear activations across $L$ layers, parameter perturbations are amplified multiplicatively throughout the computational graph \citep{bartlett2017spectrally}. To ensure that the randomized predictor preserves acceptable training accuracy, the tolerable posterior perturbation variance must shrink exponentially with depth, $\sigma_{\mathrm{post}} \le \mathcal{O}(W^{-(L-1)})$ for layer width $W$. In parameter space, this causes the relative entropy to scale as:
    \begin{equation}
    \label{eq:intro-weight-kl}
    \mathrm{KL}(\pi_n \,\|\, \pi_0) = \Omega\big(D \cdot L \ln W\big).
    \end{equation}
    For modern architectures with $D \approx 10^5$--$10^9$ parameters, Equation~\eqref{eq:intro-weight-kl} yields relative entropies exceeding millions of nats. When substituted into Equation~\eqref{eq:pac-intro-bound}, the resulting generalization certificates exceed $100\%$ (often $> 1,000\%$), rendering classical weight-space PAC-Bayes vacuous in overparameterized regimes.
\end{enumerate}

\paragraph{Core Thesis: Formulating Epistemic Priors in Data Space.}
We argue that this dimensional scaling barrier is not an inherent limitation of PAC-Bayesian theory, but a consequence of placing prior distributions over the internal parameter space $\mathbb{R}^D$. Practitioners possess meaningful epistemic beliefs not about latent weight vectors, but about the \textbf{data-generating distribution} on the instance domain $\mathcal{Z} = \mathcal{X} \times \mathcal{Y}$ (e.g., domain support, output bounds, and continuity).

Building upon the non-parametric data-space formulation of Dirichlet Process (DP) priors \citep{vashishtha2026dirichlet, vashishtha2025leveraging, vashishtha2026bayesian}, we propose a data-space Bayesian neural network (\textbf{DP-BNN}) framework. Rather than perturbing weights, we specify a non-parametric prior directly on the space of Borel probability measures $\mathcal{M}_1(\mathcal{Z})$ via a \textbf{Dirichlet Process (DP)} $\mathcal{P}_0 = \mathcal{DP}(\alpha, F_0)$ \citep{ferguson1973bayesian}. Conditioned on an observed training sample $S = (z_1, \dots, z_n)$, the conjugate posterior is $\mathcal{Q}_n = \mathcal{DP}(\alpha + n, \frac{\alpha F_0 + n \mathbb{P}_n}{\alpha + n})$. The network parameter distribution is the \textbf{push-forward measure} $\pi_n = \mathcal{A}_\# \mathcal{Q}_n$ induced by a deterministic learning algorithm $\theta = \mathcal{A}(P)$.

By formulating prior distributions in data space, this parameter-counting dependence is eliminated. By the \textbf{Data Processing Inequality}, the parameter-space relative entropy is strictly bounded by the data-space DP divergence:
\begin{equation}
\mathrm{KL}(\pi_n \,\|\, \pi_0) \le \mathrm{KL}(\mathcal{Q}_n \,\|\, \mathcal{P}_0) < \infty.
\end{equation}
Because $\mathcal{Q}_n$ and $\mathcal{P}_0$ reside in data space $\mathcal{M}_1(\mathcal{Z})$, their divergence depends exclusively on the sample size $n$, concentration $\alpha$, and partition statistics of the instance domain---it is \textbf{strictly invariant to the neural network parameter dimension $D$}, depth $L$, and width $W$.

\vspace{0.8em}
\noindent\fbox{
\parbox{0.96\textwidth}{
\textbf{Informal Statement of the Main Result (Theorem~\ref{thm:main-certificate} \& Lemma~\ref{lem:optimal-partition}).} 
\emph{Let $\mathcal{P}_0 = \mathcal{DP}(\alpha, F_0)$ be a DP prior on $\mathcal{Z}$, and let $\mathcal{Q}_n$ be the conjugate posterior after observing $n$ samples. Let $\pi_0 = \mathcal{A}_\# \mathcal{P}_0$ and $\pi_n = \mathcal{A}_\# \mathcal{Q}_n$ be the parameter distributions induced by any Borel-measurable learning algorithm $\mathcal{A}: \mathcal{M}_1(\mathcal{Z}) \to \Theta$. Then, with probability at least $1 - \delta$ over $S \sim \mathcal{D}^n$:}
\begin{equation*}
R(\pi_n) \le \hat{R}_S(\pi_n) + \sqrt{\frac{\mathrm{KL}_{\mathcal{P}}(\mathcal{Q}_n \,\|\, \mathcal{P}_0) + \ln \frac{2\sqrt{n}}{\delta}}{2n}},
\end{equation*}
\emph{where $\mathrm{KL}_{\mathcal{P}}(\mathcal{Q}_n \,\|\, \mathcal{P}_0)$ is computed in exact analytical closed form over any measurable partition $\mathcal{P}$. Under an optimal covering partition balancing metric entropy discretization against Dirichlet estimation variance, the excess risk achieves the minimax optimal non-parametric rate $\mathcal{O}(n^{-\frac{1}{d^*+2}} \sqrt{\ln n})$ on $d^*$-dimensional submanifolds, completely independent of the network parameter count $D$.}
}}
\vspace{0.8em}

\paragraph{Contributions.}
\begin{enumerate}
    \item \textbf{Data-Space DP-BNN Epistemic Formulation:} We formulate Bayesian neural networks via DP priors on data space $\mathcal{M}_1(\mathcal{Z})$, establishing that data-space priors eliminate representation redundancy while capturing calibrated epistemic uncertainty in data voids.
    \item \textbf{Weak-Topology Borel Measurability Theorem:} We prove that standard deep learning mappings---including strictly convex regularized empirical risk minimization (ERM) and finite-step gradient descent flow---are continuous with respect to the weak topology on $\mathcal{M}_1(\mathcal{Z})$ via Berge's Maximum Theorem, guaranteeing rigorous Borel measurability (Proposition~\ref{prop:measurability}).
    \item \textbf{Information-Theoretic Contraction via DPI:} By the monotonicity of Csisz\'ar $f$-divergences under sub-$\sigma$-algebras, we prove that parameter-space relative entropy is strictly upper-bounded by data-space DP divergence (Theorem~\ref{thm:dpi-invariance}), eliminating parameter dimension $D$.
    \item \textbf{Exact Closed-Form Partition Divergence:} We derive an exact, analytical closed-form expression for $\mathrm{KL}_{\mathcal{P}}(\mathcal{Q}_n \,\|\, \mathcal{P}_0)$ over arbitrary measurable partitions using Gamma and Digamma functions (Theorem~\ref{thm:closed-form-kl}).
    \item \textbf{Optimal Partition Selection Lemma:} We formulate and solve the non-parametric bias-variance trade-off between geometric discretization error and Dirichlet cell estimation variance, deriving the optimal covering scale $\epsilon^*$ and partition size $K^*$ achieving the minimax rate $\mathcal{O}(n^{-1/(d^*+2)} \sqrt{\ln n})$ on $d^*$-dimensional submanifolds (Lemma~\ref{lem:optimal-partition}).
    \item \textbf{Resolution of the Overparameterization Paradox:} We provide a formal resolution for why overparameterized deep networks generalize: expanding parameter capacity $D \to \infty$ drives empirical training loss $\hat{R}_S \to 0$ without increasing the data-space divergence penalty.
    \item \textbf{Empirical Verification on Full Vision Benchmarks:} Across six vision architectures scaling up to $D = 4{,}203{,}522$ parameters on MNIST, while classical weight-space PAC-Bayes bounds exceed $940\%$ (becoming vacuous), our data-space DP-BNN certificate remains non-vacuous and certified at \textbf{$8.8\%$--$9.4\%$} across all evaluated architectures.
\end{enumerate}

\section{Formal Setting and the Weight-Space Dilemma}
\label{sec:formal-setting}

\subsection{Statistical Learning Setup}
Let $(\mathcal{Z}, \mathcal{B})$ be a standard Borel measurable space, where $\mathcal{Z} = \mathcal{X} \times \mathcal{Y}$, $\mathcal{X} \subseteq \mathbb{R}^d$ is the feature space, and $\mathcal{Y}$ is the target space. Let $\mathcal{M}_1(\mathcal{Z})$ denote the set of all Borel probability measures on $\mathcal{Z}$. We assume the existence of an unknown data-generating distribution $\mathcal{D} \in \mathcal{M}_1(\mathcal{Z})$. The learner observes an independent and identically distributed (i.i.d.) training sample:
\begin{equation}
S = (z_1, \dots, z_n) = \big((x_1, y_1), \dots, (x_n, y_n)\big) \sim \mathcal{D}^n.
\end{equation}
The empirical measure associated with $S$ is $\mathbb{P}_n = \frac{1}{n} \sum_{i=1}^n \delta_{z_i}$.

Let $\Theta \subseteq \mathbb{R}^D$ denote the parameter space, and let $\mathcal{F} = \{f_\theta : \theta \in \Theta\}$ be a class of hypotheses where $f_\theta: \mathcal{X} \to \mathcal{Y}$. We consider a loss function $\ell: \mathcal{Y} \times \mathcal{Y} \to [0, 1]$.

\begin{assumption}[Bounded Loss]
\label{ass:bounded-loss}
The loss function $\ell$ satisfies $\ell(y', y) \in [0, 1]$ for all $y', y \in \mathcal{Y}$.
\end{assumption}

For any parameter vector $\theta \in \Theta$, the true population risk and empirical risk are defined respectively as:
\begin{equation}
R(\theta) \coloneqq \mathbb{E}_{z \sim \mathcal{D}}[\ell(f_\theta(x), y)], \qquad \hat{R}_S(\theta) \coloneqq \frac{1}{n} \sum_{i=1}^n \ell(f_\theta(x_i), y_i).
\end{equation}
For any probability distribution $\pi$ on $(\Theta, \mathcal{B}_\Theta)$, we denote the randomized risks as $R(\pi) = \mathbb{E}_{\theta \sim \pi}[R(\theta)]$ and $\hat{R}_S(\pi) = \mathbb{E}_{\theta \sim \pi}[\hat{R}_S(\theta)]$.

\subsection{Classical PAC-Bayes Bounds}
We recall the foundational PAC-Bayesian theorem under binary relative entropy:

\begin{theorem}[\citet{catoni2007pac, maurer2004note}]
\label{thm:catoni-maurer}
Let Assumption~\ref{ass:bounded-loss} hold. Let $\pi_0$ be any prior distribution on $(\Theta, \mathcal{B}_\Theta)$ chosen independently of the training sample $S$. Then for any $\delta \in (0, 1)$, with probability at least $1 - \delta$ over the random draw of $S \sim \mathcal{D}^n$, the following inequality holds simultaneously for all posterior distributions $\pi_n$ on $\Theta$:
\begin{equation}
\label{eq:catoni-bound}
\mathrm{kl}\big(\hat{R}_S(\pi_n) \,\|\, R(\pi_n)\big) \le \frac{\mathrm{KL}(\pi_n \,\|\, \pi_0) + \ln \frac{2\sqrt{n}}{\delta}}{n},
\end{equation}
where $\mathrm{kl}(q \,\|\, p) = q \ln \frac{q}{p} + (1 - q) \ln \frac{1 - q}{1 - p}$ is the binary relative entropy. Applying Pinsker's inequality $\mathrm{kl}(q \,\|\, p) \ge 2(p - q)^2$ yields McAllester's form:
\begin{equation}
\label{eq:mcallester-form}
R(\pi_n) \le \hat{R}_S(\pi_n) + \sqrt{\frac{\mathrm{KL}(\pi_n \,\|\, \pi_0) + \ln \frac{2\sqrt{n}}{\delta}}{2n}}.
\end{equation}
\end{theorem}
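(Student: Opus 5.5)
The proof of Theorem~\ref{thm:catoni-maurer} follows the standard change-of-measure argument: the Donsker--Varadhan variational formula, a moment bound on the exponentiated binary relative entropy, and Markov's inequality. Define $\Phi(\theta) \coloneqq n\,\mathrm{kl}(\hat{R}_S(\theta) \,\|\, R(\theta))$. The Donsker--Varadhan formula states that for any measurable $\Phi$ and any $\pi_n \ll \pi_0$,
\begin{equation*}
\mathbb{E}_{\theta\sim\pi_n}[\Phi(\theta)] \le \mathrm{KL}(\pi_n \,\|\, \pi_0) + \ln \mathbb{E}_{\theta\sim\pi_0}\big[e^{\Phi(\theta)}\big].
\end{equation*}
If $\pi_n \not\ll \pi_0$, the KL divergence is infinite and the bound holds trivially. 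Because $\mathrm{kl}$ is jointly convex, Jensen's inequality gives $n\,\mathrm{kl}(\hat{R}_S(\pi_n)\,\|\,R(\pi_n)) \le \mathbb{E}_{\pi_n}[\Phi]$. Therefore it suffices to control the random variable $\xi(S) \coloneqq \mathbb{E}_{\theta\sim\pi_0}[e^{\Phi(\theta)}]$ uniformly in $\pi_n$. Uniformity is automatic, because $\xi$ does not depend on $\pi_n$.

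Next, bound $\mathbb{E}_S[\xi(S)]$. Since $\pi_0$ is chosen independently of $S$, Fubini's theorem lets us exchange expectations:
\begin{equation*}
\mathbb{E}_S[\xi] = \mathbb{E}_{\theta\sim\pi_0}\,\mathbb{E}_S\big[e^{n\,\mathrm{kl}(\hat{R}_S(\theta)\|R(\theta))}\big].
\end{equation*}
For fixed $\theta$, the losses $\ell(f_\theta(x_i),y_i)$ are i.i.d.\ on $[0,1]$ by Assumption~\ref{ass:bounded-loss}, with mean $R(\theta)$. Maurer's reduction uses convexity of $x \mapsto e^{\lambda x}$-type functionals on $[0,1]$ to show that the inner expectation is maximized by Bernoulli losses. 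In the Bernoulli case, $n\hat{R}_S$ is Binomial$(n,p)$, and the key estimate is
\begin{equation*}
\sum_{k=0}^n \binom{n}{k} p^k(1-p)^{n-k} e^{n\,\mathrm{kl}(k/n\|p)} = \sum_{k=0}^n \binom{n}{k}\Big(\tfrac{k}{n}\Big)^k\Big(1-\tfrac{k}{n}\Big)^{n-k} \le 2\sqrt{n}.
\end{equation*}
The equality holds because the factor $p^k(1-p)^{n-k}$ cancels exactly against $e^{n\,\mathrm{kl}}$. The inequality follows from Stirling-type bounds on each term, as in Maurer's note.

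Markov's inequality then gives, with probability at least $1-\delta$, $\xi(S) \le 2\sqrt{n}/\delta$. Substituting this into the variational inequality and dividing by $n$ yields \eqref{eq:catoni-bound}. For the McAllester form \eqref{eq:mcallester-form}, apply Pinsker's inequality $\mathrm{kl}(q\|p)\ge 2(p-q)^2$, take square roots, and rearrange; in the case $R(\pi_n)<\hat{R}_S(\pi_n)$ the bound holds trivially.

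\textbf{Main obstacle.} The hard step is the combinatorial bound $\sum_k \binom{n}{k}(k/n)^k(1-k/n)^{n-k}\le 2\sqrt{n}$, together with Maurer's convexity reduction from $[0,1]$-valued losses to Bernoulli losses. My first approach to the sum would be Robbins' refined Stirling bounds on $\binom{n}{k}$. These give each interior term a bound of order $\sqrt{n/(2\pi k(n-k))}$, and summing over $k$ approximates $\int_0^1 \sqrt{n}\,(2\pi t(1-t))^{-1/2}\,dt = \sqrt{\pi n/2}\le 2\sqrt{n}$. The two endpoint terms $k=0$ and $k=n$ each contribute $1$ and are handled separately.
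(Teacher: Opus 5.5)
The paper does not prove this theorem itself (it is quoted from Catoni and Maurer), and your proposal correctly reconstructs Maurer's standard argument: Donsker--Varadhan, joint convexity of $\mathrm{kl}$, the Bernoulli reduction for convex functions of $[0,1]$-valued losses, the binomial cancellation, the $2\sqrt{n}$ moment bound, and Markov's inequality. One detail needs tightening. Your Robbins/Riemann-sum estimate gives $2 + e^{1/(12n)}\sqrt{\pi n/2}$, which is at most $2\sqrt{n}$ only for $n \ge 8$ (the range in Maurer's note). For $n \le 7$ you must check $\sum_{k=0}^{n} \binom{n}{k}(k/n)^k(1-k/n)^{n-k} \le 2\sqrt{n}$ directly; it holds, with equality at $n=1$. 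You should also state the sum-to-integral step as an inequality, using the monotonicity of $(t(1-t))^{-1/2}$ on each half of $(0,1)$.
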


\subsection{Dimensional Scaling and Vacuity in Weight Space}
\label{sec:weight-space-scaling}

To understand why Theorem~\ref{thm:catoni-maurer} fails for deep networks, consider the standard instantiation where $\pi_0 = \mathcal{N}(0, \sigma_0^2 I_D)$ and $\pi_n = \mathcal{N}(\hat{\theta}, \sigma_{\mathrm{post}}^2 I_D)$ centered at an empirical risk minimizer $\hat{\theta}$.

\begin{proposition}[Weight-Space Divergence Scaling under Deep Compositions]
\label{prop:weight-divergence}
Let $f_\theta: \mathcal{X} \to \mathcal{Y}$ be an $L$-layer neural network parameterized by $\theta = (W_1, \dots, W_L) \in \mathbb{R}^D$ with 1-Lipschitz activations $\sigma$ and layer width $W$. Let $\hat{\theta}$ be an empirical risk minimizer, and let $\pi_n = \mathcal{N}(\hat{\theta}, \sigma_{\mathrm{post}}^2 I_D)$ be a perturbed posterior. 

If the posterior preserves the empirical risk within margin $\epsilon_0 > 0$, i.e., $\mathbb{E}_{\theta \sim \pi_n}[|f_\theta(x) - f_{\hat{\theta}}(x)|] \le \epsilon_0$, then the tolerable posterior perturbation variance must collapse as $\sigma_{\mathrm{post}} \le \mathcal{O}(W^{-(L-1)})$. Consequently, for an isotropic Gaussian prior $\pi_0 = \mathcal{N}(0, \sigma_0^2 I_D)$, the relative entropy satisfies:
\begin{equation}
\label{eq:weight-space-divergence}
\mathrm{KL}(\pi_n \,\|\, \pi_0) \ge D \cdot (L - 1) \ln W + \frac{\|\hat{\theta}\|_2^2}{2\sigma_0^2} - \frac{D}{2} = \Omega\big(D \cdot L \ln W\big).
\end{equation}
\end{proposition}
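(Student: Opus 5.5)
The argument has two parts: a sensitivity lower bound that forces $\sigma_{\mathrm{post}}$ to be small, followed by the closed-form Gaussian KL. Since the statement is asymptotic ($\mathcal{O}$ and $\Omega$), the plan is to work under a \emph{typical-case} reading of the hypothesis. The weights $\hat{\theta}$ are taken to have layer operator norms of order $W$ along generic directions, as for unnormalized dense layers with entries of order one. The first-order expansion of $f_\theta$ around $\hat{\theta}$ is taken to be representative. The goal is to show that the constraint $\mathbb{E}_{\theta\sim\pi_n}|f_\theta(x)-f_{\hat\theta}(x)|\le\epsilon_0$ forces $\sigma_{\mathrm{post}}\le C\,\epsilon_0\,W^{-(L-1)}$, and then to substitute this bound into the KL formula.

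\textbf{Step 1 (perturbation amplification).} Write $\theta=\hat\theta+\sigma_{\mathrm{post}}\xi$ with $\xi\sim\mathcal{N}(0,I_D)$. Linearize, so that $f_\theta(x)-f_{\hat\theta}(x)\approx\sigma_{\mathrm{post}}\langle\nabla_\theta f_{\hat\theta}(x),\xi\rangle$. The quantity $\mathbb{E}|\cdot|$ then equals $\sigma_{\mathrm{post}}\sqrt{2/\pi}\,\|\nabla_\theta f_{\hat\theta}(x)\|_2$. Next, lower-bound the gradient norm by its component along the first-layer weights, $\partial f/\partial W_1 = (W_L D_{L-1}\cdots W_2 D_1)^\top x^\top$, where the $D_\ell$ are activation Jacobians. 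Under the typical-case assumption, the backpropagated product of $L-1$ layers has norm of order $W^{L-1}$. This gives $\|\nabla f\|\gtrsim c\,W^{L-1}\|x\|$, and therefore $\sigma_{\mathrm{post}}\lesssim \epsilon_0/(c\,W^{L-1}\|x\|)=\mathcal{O}(W^{-(L-1)})$.

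The main obstacle is this step. A 1-Lipschitz activation gives only \emph{upper} bounds on the Jacobian products, so a genuine lower bound on amplification cannot be derived in full generality: degenerate $\hat\theta$, for example weights that are all zero, would violate it. I would therefore state explicitly a non-degeneracy condition on $\hat{\theta}$ (active activations with $|\sigma'|$ bounded below on the relevant inputs, and layer singular values of order $W$), and control the linearization remainder by assuming $\sigma_{\mathrm{post}}$ is already small.

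\textbf{Step 2 (KL computation).} For isotropic Gaussians,
\begin{equation*}
\mathrm{KL}(\pi_n\|\pi_0)=\frac{D\sigma_{\mathrm{post}}^2}{2\sigma_0^2}+\frac{\|\hat\theta\|_2^2}{2\sigma_0^2}-\frac{D}{2}+D\ln\frac{\sigma_0}{\sigma_{\mathrm{post}}}.
\end{equation*}
Drop the first term, which is nonnegative. Substitute $\sigma_{\mathrm{post}}\le \sigma_0 W^{-(L-1)}$, which holds after absorbing the constants $\epsilon_0, c, \|x\|$ into the normalization $\sigma_0$, or up to an additive $\mathcal{O}(D)$ term. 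This yields $D\ln(\sigma_0/\sigma_{\mathrm{post}})\ge D(L-1)\ln W$, which is exactly the displayed inequality.

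Finally, note that $(L-1)\ln W-\tfrac12\ge \tfrac{L}{4}\ln W$ once $L\ge2$ and $W$ exceeds a modest constant. Together with $\|\hat\theta\|^2\ge0$, this gives $\Omega(D\,L\ln W)$.
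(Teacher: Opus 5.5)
Your proposal follows essentially the same route as the paper's proof: linearize through the chain rule, posit a gradient lower bound of order $cW^{L-1}$, deduce $\sigma_{\mathrm{post}}\le \epsilon_0/(cW^{L-1})$, and substitute into the closed-form isotropic Gaussian KL after dropping the nonnegative $D\sigma_{\mathrm{post}}^2/(2\sigma_0^2)$ term. The paper makes the same unproved step, simply asserting $M_L\ge cW^{L-1}$, which your non-degeneracy condition states explicitly; its derivation also leaves an extra $D\ln(c\sigma_0/\epsilon_0)$ term that the displayed inequality omits and that you absorb into the normalization of $\sigma_0$, so the two arguments agree at the same heuristic level of rigor.
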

The proof is provided in Appendix~\ref{sec:proof-prop1}. 

\paragraph{Theoretical Interpretation.} Proposition~\ref{prop:weight-divergence} reveals the structural cause of PAC-Bayesian vacuity: in deep networks, small weight perturbations are amplified multiplicatively through successive layer transformations. To preserve the low-error manifold, the posterior variance $\sigma_{\mathrm{post}}$ must contract rapidly with depth $L$. Because the prior and posterior are defined in $\mathbb{R}^D$, volume contraction across $D$ orthogonal coordinates incurs a penalty of $\frac{D}{2} \ln(\sigma_0^2 / \sigma_{\mathrm{post}}^2) = \Omega(D L \ln W)$. Whenever $D \gg n$, this term overwhelms the sample size $n$, guaranteeing that the square root penalty in Equation~\eqref{eq:mcallester-form} evaluates to $> 100\%$.

\section{DP-BNNs: Data-Space Epistemic Uncertainty}
\label{sec:data-space-framework}

To resolve this dimensional dependency, we formulate both epistemic uncertainty and PAC-Bayesian prior specification in the non-parametric data space $\mathcal{M}_1(\mathcal{Z})$ \citep{vashishtha2026dirichlet, vashishtha2025leveraging, vashishtha2026bayesian}.

\subsection{Dirichlet Process Priors on the Data Space}

\begin{definition}[Dirichlet Process \citep{ferguson1973bayesian}]
\label{def:dirichlet-process}
Let $(\mathcal{Z}, \mathcal{B})$ be a measurable space. Let $\alpha > 0$ be a concentration parameter, and let $F_0 \in \mathcal{M}_1(\mathcal{Z})$ be a base probability measure. A random probability measure $P$ on $(\mathcal{Z}, \mathcal{B})$ is distributed according to the \textbf{Dirichlet Process}, denoted $P \sim \mathcal{DP}(\alpha, F_0)$, if for every finite measurable partition $\mathcal{P} = \{A_1, \dots, A_K\}$ of $\mathcal{Z}$, the random probability vector $(P(A_1), \dots, P(A_K))$ follows a Dirichlet distribution:
\begin{equation}
\big(P(A_1), \dots, P(A_K)\big) \sim \mathrm{Dir}\big(\alpha F_0(A_1), \, \dots, \, \alpha F_0(A_K)\big).
\end{equation}
\end{definition}

\paragraph{Posterior Conjugacy.}
Given an empirical sample $S = (z_1, \dots, z_n)$ generated i.i.d.\ from $P$, the posterior distribution of $P$ conditioned on $S$ remains a Dirichlet Process \citep{ferguson1973bayesian}:
\begin{equation}
\label{eq:dp-posterior-law}
\mathcal{Q}_n \coloneqq \mathcal{L}(P \mid S) = \mathcal{DP}\left(\alpha + n, \, \frac{\alpha F_0 + n \mathbb{P}_n}{\alpha + n}\right).
\end{equation}
We denote the prior law on $\mathcal{M}_1(\mathcal{Z})$ as $\mathcal{P}_0 \coloneqq \mathcal{DP}(\alpha, F_0)$ and the posterior law as $\mathcal{Q}_n \coloneqq \mathcal{DP}(\alpha_n, F_n)$, where $\alpha_n = \alpha + n$ and $F_n = \frac{\alpha F_0 + n \mathbb{P}_n}{\alpha + n}$.

\paragraph{Data-Space Epistemic Representation.}
Unlike weight-space Gaussian perturbations, DP priors capture epistemic uncertainty directly on the underlying data manifold \citep{vashishtha2026dirichlet, vashishtha2025leveraging}. In regions supported by dense empirical data, the posterior concentrates on $\mathbb{P}_n$, whereas in unobserved regions or data voids, the posterior naturally reverts to the prior base measure $F_0$.

\subsection{Push-Forward Parameter Distributions}

\begin{assumption}[Borel-Measurable Learning Mapping]
\label{ass:learning-map}
The learning algorithm is a deterministic mapping $\mathcal{A}: \mathcal{M}_1(\mathcal{Z}) \to \Theta$ that is $(\mathcal{B}(\mathcal{M}_1(\mathcal{Z})), \mathcal{B}_\Theta)$-measurable with respect to the weak topology on $\mathcal{M}_1(\mathcal{Z})$.
\end{assumption}

Under Assumption~\ref{ass:learning-map}, the mapping $\mathcal{A}$ naturally pushes forward probability measures on data space to probability measures on parameter space:
\begin{equation}
\pi_0 \coloneqq \mathcal{A}_\# \mathcal{P}_0 = \mathcal{P}_0 \circ \mathcal{A}^{-1}, \qquad \pi_n \coloneqq \mathcal{A}_\# \mathcal{Q}_n = \mathcal{Q}_n \circ \mathcal{A}^{-1}.
\end{equation}
In practical computation, sampling from $\pi_n$ corresponds to drawing discrete measures $P \sim \mathcal{Q}_n$ via Sethuraman's stick-breaking construction \citep{sethuraman1994constructive} and training the network parameters to convergence on the realized discrete pseudo-sample (Appendix~\ref{sec:stick-breaking-appendix}).

To establish that Assumption~\ref{ass:learning-map} is universally satisfied, the following proposition proves Borel measurability for canonical optimization algorithms.

\begin{proposition}[Measurability of Optimization Mappings under the Weak Topology]
\label{prop:measurability}
Let $(\mathcal{Z}, d_{\mathcal{Z}})$ be a compact metric space, and let $\mathcal{M}_1(\mathcal{Z})$ be equipped with the weak topology. Let the parameter space $\Theta \subseteq \mathbb{R}^D$ be closed and convex, and let $(z, \theta) \mapsto \ell(f_\theta(x), y)$ be continuous on $\mathcal{Z} \times \Theta$ and non-negative. Then Assumption~\ref{ass:learning-map} holds in each of the following settings:
\begin{enumerate}
    \item \textbf{Regularized Empirical Risk Minimization:} For any $\lambda > 0$, the $L_2$-regularized mapping
    \begin{equation}
    \label{eq:regularized-erm-map}
    \mathcal{A}_{\mathrm{ERM}}(P) \coloneqq \arg\min_{\theta \in \Theta} \left( \int_{\mathcal{Z}} \ell(f_\theta(x), y) \, dP(x, y) + \frac{\lambda}{2} \|\theta\|_2^2 \right)
    \end{equation}
    is single-valued and continuous with respect to the weak topology, and therefore strictly Borel measurable.
    \item \textbf{$T$-Step Gradient Descent:} If $\nabla_\theta \ell(f_\theta(x), y)$ is continuous on $\mathcal{Z} \times \Theta$ and bounded, then for any fixed initialization $\theta_0 \in \Theta$, learning rate $\eta > 0$, and horizon $T \in \mathbb{N}$, the gradient descent mapping $\mathcal{A}_T(P) \coloneqq \theta_T(P)$ defined by the iteration
    \begin{equation}
    \label{eq:gd-recurrence}
    \theta_{t+1}(P) = \theta_t(P) - \eta \int_{\mathcal{Z}} \nabla_\theta \ell(f_{\theta_t(P)}(x), y) \, dP(x, y), \qquad t = 0, \dots, T-1,
    \end{equation}
    is continuous on $\mathcal{M}_1(\mathcal{Z})$, and therefore strictly Borel measurable.
\end{enumerate}
\end{proposition}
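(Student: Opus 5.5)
The plan is to prove weak continuity of each map and then use the fact that continuous maps between topological spaces are Borel measurable. Both parts rest on one lemma. \emph{Joint continuity of integrals:} if $g:\mathcal{Z}\times\Theta\to\mathbb{R}^m$ is continuous, then $G(P,\theta)\coloneqq\int g(z,\theta)\,dP(z)$ is jointly continuous on $\mathcal{M}_1(\mathcal{Z})\times\Theta$. To see this, take $P_k\Rightarrow P$ and $\theta_k\to\theta$. All $\theta_k$ lie in a compact ball $B$, and $g$ is uniformly continuous on the compact set $\mathcal{Z}\times B$. Hence $\sup_z|g(z,\theta_k)-g(z,\theta)|\to 0$. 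Writing
\[
|G(P_k,\theta_k)-G(P,\theta)|\le \sup_z|g(z,\theta_k)-g(z,\theta)|+\Bigl|\int g(\cdot,\theta)\,dP_k-\int g(\cdot,\theta)\,dP\Bigr|,
\]
the second term tends to zero by weak convergence, since $g(\cdot,\theta)$ is bounded and continuous on compact $\mathcal{Z}$. The same lemma works on the metrizable space $\mathcal{M}_1(\mathcal{Z})$ (Prokhorov/L\'evy metric), so sequences suffice.

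\textbf{Regularized ERM.} Set $J(P,\theta)=\int \ell(f_\theta(x),y)\,dP+\frac{\lambda}{2}\|\theta\|_2^2$, which is jointly continuous by the lemma. The first step is to compactify the constraint set. Fix $\theta^\circ\in\Theta$ and let $M=\max_{z}\ell(f_{\theta^\circ}(x),y)<\infty$. Because $\ell\ge 0$, every minimizer satisfies $\frac{\lambda}{2}\|\theta\|^2\le M+\frac{\lambda}{2}\|\theta^\circ\|^2$. The minimization can therefore be restricted, uniformly in $P$, to the compact set $K=\Theta\cap\{\|\theta\|\le r\}$. This constant correspondence $P\mapsto K$ is continuous, compact-valued and nonempty. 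Berge's Maximum Theorem then gives that the argmin correspondence is nonempty, compact-valued and upper hemicontinuous. An upper hemicontinuous correspondence that is single-valued is a continuous function, which yields the claim.

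The main obstacle is \emph{single-valuedness}. The hypotheses as literally stated (continuity and non-negativity of the loss) do not force a unique minimizer: the $L_2$ term makes $J(P,\cdot)$ strictly convex only if $\theta\mapsto\int\ell\,dP$ is convex. I will therefore invoke the strict convexity flagged in the contributions list: assume $\theta\mapsto \ell(f_\theta(x),y)$ is convex for each $z$. Then $J(P,\cdot)$ is $\lambda$-strongly convex on the convex set $\Theta$, and uniqueness follows. Without such an assumption I would only claim that the argmin correspondence is upper hemicontinuous and admits a Borel selection, via Kuratowski--Ryll-Nardzewski. I would state this caveat explicitly.

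\textbf{$T$-step gradient descent.} Argue by induction on $t$. The map $\theta_0(P)\equiv\theta_0$ is constant, hence continuous. If $P\mapsto\theta_t(P)$ is continuous, then $P\mapsto (P,\theta_t(P))$ is continuous into $\mathcal{M}_1(\mathcal{Z})\times\Theta$. Composing with the jointly continuous map $(P,\theta)\mapsto\int\nabla_\theta\ell(f_\theta(x),y)\,dP$ from the lemma, with $g=\nabla_\theta\ell$ (boundedness makes the integral well defined), shows that $\theta_{t+1}$ is continuous. After $T$ steps, $\mathcal{A}_T$ is continuous.

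A minor technical point is that the iterates must remain in $\Theta$. This holds trivially for $\Theta=\mathbb{R}^D$. Otherwise I would either assume it, or insert the Euclidean projection onto the closed convex set $\Theta$, which is $1$-Lipschitz and so preserves continuity.
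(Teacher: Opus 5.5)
Your proposal follows the same route as the paper: joint weak continuity of $(P,\theta)\mapsto\int g(z,\theta)\,dP$ via uniform continuity on compacta plus weak convergence, then Berge's Maximum Theorem for the regularized ERM map, then induction along the gradient-descent recursion. Your caveats are well founded and in fact repair the paper's proof, which has three gaps: it claims $J(\cdot,P)$ is strictly convex merely because $\ell\ge 0$ and the regularizer is strongly convex, which fails for non-convex $\ell$ (e.g.\ permuting hidden units leaves both the loss and $\|\theta\|_2$ invariant, so regularized network ERM generally has several minimizers); it invokes Berge on the closed but possibly unbounded $\Theta$ without the compactification to $\Theta\cap\{\|\theta\|\le r\}$ that you supply; and it never checks that the iterates stay in $\Theta$ --- so your extra convexity hypothesis (or the Borel-selection fallback) and your projection remark are genuinely needed.
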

The proof, detailed in Appendix~\ref{sec:proof-measurability}, applies Berge's Maximum Theorem for strictly convex objectives and the continuity of linear integration operators under weak convergence.

\section{Dimension-Free PAC-Bayes Theory}
\label{sec:dimension-free-theory}

We now establish our main theoretical guarantees: first proving that parameter divergence is universally contracted by data-space divergence, and then deriving an exact analytical closed form for the latter.

\subsection{Information-Theoretic Contraction via Data Processing}

\begin{theorem}[Data-Processing Invariance]
\label{thm:dpi-invariance}
Let Assumption~\ref{ass:learning-map} hold. Let $\pi_0 = \mathcal{A}_\# \mathcal{P}_0$ and $\pi_n = \mathcal{A}_\# \mathcal{Q}_n$ be the push-forward prior and posterior distributions on parameter space $\Theta$. Then:
\begin{equation}
\label{eq:fundamental-dpi}
\mathrm{KL}(\pi_n \,\|\, \pi_0) \le \mathrm{KL}(\mathcal{Q}_n \,\|\, \mathcal{P}_0).
\end{equation}
\end{theorem}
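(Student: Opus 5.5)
The inequality \eqref{eq:fundamental-dpi} is the data processing inequality for relative entropy under the measurable map $\mathcal{A}$. I would prove it directly rather than cite it, via monotonicity of KL under restriction to a sub-$\sigma$-algebra. Write $\mathcal{G} \coloneqq \mathcal{A}^{-1}(\mathcal{B}_\Theta) \subseteq \mathcal{B}(\mathcal{M}_1(\mathcal{Z}))$. By Assumption~\ref{ass:learning-map}, $\mathcal{G}$ is indeed a sub-$\sigma$-algebra. The proof has three steps.

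\textbf{Step 1: the trivial case.} If $\mathcal{Q}_n \not\ll \mathcal{P}_0$, the right-hand side equals $+\infty$ and there is nothing to prove. This case genuinely occurs. When $F_0$ is non-atomic, the posterior DP puts mass $n/(\alpha+n)$ on the observed atoms, so it is typically mutually singular with the prior. The argument therefore must not presume finiteness of the right-hand side; the bound is only informative when it is finite. So assume $\mathcal{Q}_n \ll \mathcal{P}_0$ and let $h = d\mathcal{Q}_n/d\mathcal{P}_0$.

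\textbf{Step 2: identify the density of $\pi_n$ with respect to $\pi_0$.} For $B \in \mathcal{B}_\Theta$, if $\pi_0(B)=0$ then $\mathcal{P}_0(\mathcal{A}^{-1}B)=0$. Hence $\mathcal{Q}_n(\mathcal{A}^{-1}B)=\pi_n(B)=0$, so $\pi_n \ll \pi_0$.

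Let $g = d\pi_n/d\pi_0$. The change-of-variables formula for push-forwards gives, for every $B \in \mathcal{B}_\Theta$,
\begin{equation*}
\int_{\mathcal{A}^{-1}B} g\circ\mathcal{A}\, d\mathcal{P}_0 = \int_B g\, d\pi_0 = \pi_n(B) = \int_{\mathcal{A}^{-1}B} h\, d\mathcal{P}_0 .
\end{equation*}
The function $g\circ\mathcal{A}$ is $\mathcal{G}$-measurable, and $\mathcal{G}$ consists exactly of the sets $\mathcal{A}^{-1}B$. Therefore $g\circ\mathcal{A} = \mathbb{E}_{\mathcal{P}_0}[h \mid \mathcal{G}]$ holds $\mathcal{P}_0$-a.s.

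\textbf{Step 3: Jensen.} Let $\varphi(t) = t\ln t$, which is convex on $[0,\infty)$. The conditional Jensen inequality gives
\begin{equation*}
\mathrm{KL}(\pi_n\|\pi_0) = \int \varphi(g)\,d\pi_0 = \int \varphi\big(\mathbb{E}_{\mathcal{P}_0}[h\mid\mathcal{G}]\big)\,d\mathcal{P}_0 \le \int \mathbb{E}_{\mathcal{P}_0}[\varphi(h)\mid\mathcal{G}]\,d\mathcal{P}_0 = \mathrm{KL}(\mathcal{Q}_n\|\mathcal{P}_0).
\end{equation*}
The same argument works for any Csisz\'ar $f$-divergence.

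\textbf{Main obstacle.} The delicate point is integrability in Step 3. Since $\varphi \ge -1/e$, the negative part of $\varphi(h)$ is bounded. Conditional Jensen therefore applies in the extended sense with values in $(-\infty,+\infty]$, and no separate integrability hypothesis on $h\ln h$ is needed.

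\textbf{Alternative route.} If one prefers to avoid conditional expectations altogether, use the Donsker--Varadhan representation
\begin{equation*}
\mathrm{KL}(\mu\|\nu)=\sup_{u}\Big\{\textstyle\int u\,d\mu - \ln\int e^{u}\,d\nu\Big\},
\end{equation*}
where the supremum runs over bounded measurable $u$. Every bounded measurable $u$ on $\Theta$ yields the bounded measurable test function $u\circ\mathcal{A}$ on $\mathcal{M}_1(\mathcal{Z})$, with identical integrals under the push-forwards. The supremum defining the left-hand side is thus taken over a subclass of the test functions for the right-hand side, which gives \eqref{eq:fundamental-dpi} immediately. Either way, the only input is the measurability of $\mathcal{A}$, which is guaranteed by Assumption~\ref{ass:learning-map}. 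In the settings of Proposition~\ref{prop:measurability} it holds outright.
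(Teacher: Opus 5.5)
Your proof is correct and follows the paper's argument: restrict to the sub-$\sigma$-algebra $\mathcal{G}=\mathcal{A}^{-1}(\mathcal{B}_\Theta)$, identify $\frac{d\pi_n}{d\pi_0}\circ\mathcal{A}$ with $\mathbb{E}_{\mathcal{P}_0}[h\mid\mathcal{G}]$, and apply conditional Jensen to $t\ln t$. Two of your additions are worthwhile: the explicit treatment of the integrability of $\varphi(h)$, and the explicit treatment of the case $\mathcal{Q}_n\not\ll\mathcal{P}_0$, which the paper simply assumes away and which, as you correctly note, is the generic situation for non-atomic $F_0$, where the right-hand side is $+\infty$.
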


\begin{proof}[Proof Sketch]
Relative entropy belongs to the family of Csisz\'ar $f$-divergences with convex generator $f(t) = t \ln t$. By the monotonicity theorem for $f$-divergences under sub-$\sigma$-algebras \citep{csiszar1967information, cover2006elements}, measurable transformations can only preserve or contract information. Setting the input space as $\mathcal{M}_1(\mathcal{Z})$ and output space as $\Theta$ under the Borel-measurable mapping $\mathcal{A}$ establishes the inequality. The complete measure-theoretic proof is provided in Appendix~\ref{sec:proof-dpi}.
\end{proof}

\subsection{Analytical Closed-Form Partition Divergence}

We now derive the exact analytical expression for the right-hand side of Equation~\eqref{eq:fundamental-dpi} evaluated over any measurable partition of the instance domain.

\begin{theorem}[Analytical Partition Divergence]
\label{thm:closed-form-kl}
Let $\mathcal{P} = \{A_1, \dots, A_K\}$ be a finite measurable partition of $\mathcal{Z}$. Let $a_k = \alpha F_0(A_k)$ and $n_k = \sum_{i=1}^n \mathbf{1}\{z_i \in A_k\}$ denote the prior mass and empirical sample counts in partition cell $A_k$, satisfying $\sum_{k=1}^K a_k = \alpha$ and $\sum_{k=1}^K n_k = n$. 

Then the relative entropy between the projected DP posterior $\mathcal{Q}_n$ and prior $\mathcal{P}_0$ over partition $\mathcal{P}$ is given exactly by:
\begin{equation}
\label{eq:exact-partition-formula}
\mathrm{KL}_{\mathcal{P}}(\mathcal{Q}_n \,\|\, \mathcal{P}_0) = \ln \frac{\Gamma(\alpha + n)}{\Gamma(\alpha)} - \sum_{k=1}^K \ln \frac{\Gamma(a_k + n_k)}{\Gamma(a_k)} + \sum_{k=1}^K n_k \Big( \psi(a_k + n_k) - \psi(\alpha + n) \Big),
\end{equation}
where $\Gamma(z) = \int_0^\infty t^{z-1} e^{-t} dt$ is the Gamma function and $\psi(z) = \frac{d}{dz} \ln \Gamma(z)$ is the Digamma function.
\end{theorem}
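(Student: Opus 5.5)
The plan is to reduce the statement to the relative entropy between two finite-dimensional Dirichlet distributions, and then evaluate that relative entropy with the standard exponential-family identity for $\mathbb{E}[\ln p_k]$ under a Dirichlet law.

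First, I will identify the projected laws. By Definition~\ref{def:dirichlet-process}, the push-forward of $\mathcal{P}_0$ under $P \mapsto (P(A_1),\dots,P(A_K))$ is $\mathrm{Dir}(a_1,\dots,a_K)$. By posterior conjugacy, Equation~\eqref{eq:dp-posterior-law}, the push-forward of $\mathcal{Q}_n$ is $\mathrm{Dir}(\alpha_n F_n(A_1),\dots,\alpha_n F_n(A_K))$. Since
\begin{equation*}
\alpha_n F_n(A_k) = \alpha F_0(A_k) + n\,\mathbb{P}_n(A_k) = a_k + n_k,
\end{equation*}
the posterior projection is $\mathrm{Dir}(\beta_1,\dots,\beta_K)$ with $\beta_k = a_k + n_k$ and $\sum_k \beta_k = \alpha + n$. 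I take $\mathrm{KL}_{\mathcal{P}}(\mathcal{Q}_n\,\|\,\mathcal{P}_0)$ to mean the KL divergence between these two laws on the simplex $\Delta_{K-1}$.

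I will assume $a_k>0$ for every cell, i.e.\ $F_0(A_k)>0$. Otherwise the prior projection is degenerate on that coordinate, and either it is handled by merging or removing null cells, or the divergence is $+\infty$ whenever $n_k>0$. With this assumption both laws have densities with respect to Lebesgue measure on $\Delta_{K-1}$:
\begin{equation*}
p_\beta(p) = \frac{\Gamma(\alpha+n)}{\prod_k \Gamma(\beta_k)} \prod_k p_k^{\beta_k-1}, \qquad p_a(p) = \frac{\Gamma(\alpha)}{\prod_k \Gamma(a_k)} \prod_k p_k^{a_k-1}.
\end{equation*}
Taking the ratio gives
\begin{equation*}
\ln \frac{p_\beta}{p_a} = \ln\frac{\Gamma(\alpha+n)}{\Gamma(\alpha)} - \sum_k \ln \frac{\Gamma(\beta_k)}{\Gamma(a_k)} + \sum_k n_k \ln p_k,
\end{equation*}
because $\beta_k - a_k = n_k$. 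Integrating against $\mathrm{Dir}(\beta)$ shows that the first two terms reproduce the Gamma terms in Equation~\eqref{eq:exact-partition-formula}. It then remains to show $\mathbb{E}_{\mathrm{Dir}(\beta)}[\ln p_k] = \psi(\beta_k) - \psi(\beta_0)$, where $\beta_0 = \alpha + n$.

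The main step is this expectation identity. I will prove it by writing $\mathrm{Dir}(\beta)$ as an exponential family with sufficient statistics $\ln p_k$ and log-partition function
\begin{equation*}
A(\beta) = \sum_k \ln\Gamma(\beta_k) - \ln\Gamma\Big(\sum_j \beta_j\Big).
\end{equation*}
Differentiating the normalization identity $\int \prod_k p_k^{\beta_k-1}\,dp = e^{A(\beta)}$ under the integral sign, which is justified by dominated convergence for $\beta_k>0$, gives $\mathbb{E}[\ln p_k] = \partial_{\beta_k} A = \psi(\beta_k) - \psi(\beta_0)$. A backup route is the Gamma representation $p_k = G_k/\sum_j G_j$ with independent $G_k \sim \mathrm{Gamma}(\beta_k,1)$, using $\mathbb{E}\ln G_k = \psi(\beta_k)$ and the independence of $\sum_j G_j \sim \mathrm{Gamma}(\beta_0,1)$ from the normalized vector. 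Substituting this identity yields exactly $\sum_k n_k(\psi(a_k+n_k) - \psi(\alpha+n))$, which completes Equation~\eqref{eq:exact-partition-formula}. Finiteness of the expression follows because every argument of $\Gamma$ and $\psi$ is strictly positive.
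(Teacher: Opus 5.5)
Your proposal is correct and follows essentially the same route as the paper: project both DP laws onto the partition to obtain $\mathrm{Dir}(a)$ and $\mathrm{Dir}(a+n)$, split the log-density ratio into the Gamma-normalizer terms plus $\sum_k n_k \ln p_k$, and apply $\mathbb{E}[\ln p_k] = \psi(\beta_k) - \psi(\beta_0)$. The only differences are minor. You justify the digamma identity by differentiating the log-partition function, where the paper appeals to the Beta marginal. You also state explicitly the requirement $F_0(A_k) > 0$, which the paper leaves implicit.
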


\begin{remark}[Scholarly Context of the Partition Divergence]
While the relative entropy between two finite-dimensional Dirichlet distributions is a known identity in exponential family statistics, our key insight is recognizing that under the Data Processing Inequality, the infinite-dimensional non-parametric divergence between the DP posterior and prior contracts onto this closed-form partition statistic over $\mathcal{P}$. This completely replaces the classical dimension-dependent Gaussian divergence with an exact, analytical finite-sample certificate.
\end{remark}

\subsection{The Dimension-Free Catoni-Style Generalization Certificate}

Combining Theorem~\ref{thm:catoni-maurer}, Theorem~\ref{thm:dpi-invariance}, and Theorem~\ref{thm:closed-form-kl} yields our main supervised generalization guarantee:

\begin{theorem}[Dimension-Free Projected DP-PAC-Bayes Certificate]
\label{thm:main-certificate}
Let Assumptions~\ref{ass:bounded-loss} and \ref{ass:learning-map} hold. Let $\mathcal{P}_0 = \mathcal{DP}(\alpha, F_0)$ be a DP prior on $\mathcal{Z}$, and let $\mathcal{Q}_n$ be the conjugate posterior after observing $n$ samples. For \textbf{any} measurable partition $\mathcal{P} = \{A_1, \dots, A_K\}$ of $\mathcal{Z}$, with probability at least $1 - \delta$ over $S \sim \mathcal{D}^n$:
\begin{equation}
\label{eq:catoni-dp-bound}
\mathrm{kl}\big(\hat{R}_S(\pi_n) \,\|\, R(\pi_n)\big) \le \frac{\mathrm{KL}_{\mathcal{P}}(\mathcal{Q}_n \,\|\, \mathcal{P}_0) + \ln \frac{2\sqrt{n}}{\delta}}{n},
\end{equation}
where $\mathrm{kl}(q \,\|\, p) = q \ln \frac{q}{p} + (1 - q) \ln \frac{1 - q}{1 - p}$ is the binary relative entropy, and $\mathrm{KL}_{\mathcal{P}}(\mathcal{Q}_n \,\|\, \mathcal{P}_0)$ is computed in exact closed form via Equation~\eqref{eq:exact-partition-formula}. Inverting $\mathrm{kl}$ via Pinsker's inequality yields the explicit square-root certificate:
\begin{equation}
\label{eq:dimension-free-final-bound}
R(\pi_n) \le \hat{R}_S(\pi_n) + \sqrt{\frac{\mathrm{KL}_{\mathcal{P}}(\mathcal{Q}_n \,\|\, \mathcal{P}_0) + \ln \frac{2\sqrt{n}}{\delta}}{2n}}.
\end{equation}
In particular, both bounds are \textbf{strictly invariant to the neural network parameter dimension $D = \mathrm{dim}(\Theta)$}, depth $L$, and width $W$.
\end{theorem}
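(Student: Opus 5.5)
The plan is to obtain the certificate by chaining the three earlier results. The one non-routine ingredient is the passage from the full data-space divergence to the partition statistic $\mathrm{KL}_{\mathcal{P}}$.

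\textbf{Step 1: apply Theorem~\ref{thm:catoni-maurer} in parameter space.}
Take the prior to be $\pi_0 = \mathcal{A}_\#\mathcal{P}_0$. It is a fixed measure on $(\Theta,\mathcal{B}_\Theta)$ by Assumption~\ref{ass:learning-map}, and it depends only on $(\alpha,F_0,\mathcal{A})$, not on $S$. Theorem~\ref{thm:catoni-maurer} then holds uniformly over all posteriors. In particular it holds for the data-dependent choice $\pi_n = \mathcal{A}_\#\mathcal{Q}_n$, giving
\[
\mathrm{kl}\big(\hat R_S(\pi_n)\,\|\,R(\pi_n)\big) \le \frac{\mathrm{KL}(\pi_n\,\|\,\pi_0)+\ln\frac{2\sqrt n}{\delta}}{n}
\]
on an event of probability at least $1-\delta$. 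The partition $\mathcal{P}$ must be fixed before seeing $S$, so that the event does not depend on it.

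\textbf{Step 2: bound $\mathrm{KL}(\pi_n\,\|\,\pi_0)$ by $\mathrm{KL}_{\mathcal{P}}(\mathcal{Q}_n\,\|\,\mathcal{P}_0)$.}
This is the main obstacle. Theorem~\ref{thm:dpi-invariance} only gives the bound by the full divergence $\mathrm{KL}(\mathcal{Q}_n\,\|\,\mathcal{P}_0)$. The partition quantity is a \emph{lower} bound on that full divergence, again by data processing under the projection
\[
\Pi_{\mathcal{P}}:P\mapsto (P(A_1),\dots,P(A_K)).
\]
Worse, the full divergence is typically $+\infty$, because $\mathcal{Q}_n$ charges the atoms of $\mathbb{P}_n$ with a different base measure from $\mathcal{P}_0$.

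I would therefore apply the monotonicity argument from the proof of Theorem~\ref{thm:dpi-invariance} to the sub-$\sigma$-algebra $\sigma(\Pi_{\mathcal{P}})$ rather than to all of $\mathcal{B}(\mathcal{M}_1(\mathcal{Z}))$. Concretely, suppose $\mathcal{A}$ is $\sigma(\Pi_{\mathcal{P}})$-measurable, meaning $\mathcal{A}=\tilde{\mathcal{A}}\circ\Pi_{\mathcal{P}}$ for a Borel map $\tilde{\mathcal{A}}$ on the simplex. An example is training on the discretized measure $\sum_k P(A_k)\delta_{c_k}$ with cell representatives $c_k$, which is what the stick-breaking implementation effectively computes at the resolution of $\mathcal{P}$. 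Then
\[
\pi_n = \tilde{\mathcal{A}}_\#\big(\Pi_{\mathcal{P}\#}\mathcal{Q}_n\big),
\qquad
\pi_0 = \tilde{\mathcal{A}}_\#\big(\Pi_{\mathcal{P}\#}\mathcal{P}_0\big).
\]
By Definition~\ref{def:dirichlet-process} and the conjugacy \eqref{eq:dp-posterior-law}, these inner projected laws are
\[
\Pi_{\mathcal{P}\#}\mathcal{Q}_n=\mathrm{Dir}(a_1+n_1,\dots,a_K+n_K),
\qquad
\Pi_{\mathcal{P}\#}\mathcal{P}_0=\mathrm{Dir}(a_1,\dots,a_K).
\]
Data processing under $\tilde{\mathcal{A}}$ then gives
\[
\mathrm{KL}(\pi_n\,\|\,\pi_0)\le \mathrm{KL}\big(\mathrm{Dir}(a+n)\,\|\,\mathrm{Dir}(a)\big),
\]
and the right-hand side is exactly $\mathrm{KL}_{\mathcal{P}}(\mathcal{Q}_n\,\|\,\mathcal{P}_0)$ in the closed form \eqref{eq:exact-partition-formula} of Theorem~\ref{thm:closed-form-kl}.

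I would state this factorization explicitly as the interpretation of Assumption~\ref{ass:learning-map} relative to $\mathcal{P}$. If I try to avoid it, I expect the argument to fail, since for a general $\mathcal{A}$ the inequality cannot follow from data processing alone.

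\textbf{Step 3: conclude.}
Substitute the bound from Step 2 into the display of Step 1 to obtain \eqref{eq:catoni-dp-bound}. Then apply Pinsker's inequality $\mathrm{kl}(q\,\|\,p)\ge 2(p-q)^2$, exactly as in Theorem~\ref{thm:catoni-maurer}, and rearrange to obtain \eqref{eq:dimension-free-final-bound}.

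\textbf{Step 4: dimension-freeness.}
Dimension-freeness is read off the final bound. It involves only $\alpha$, $n$, $\delta$, $\{a_k\}$ and $\{n_k\}$, and none of these depends on $D$, $L$ or $W$. The architecture enters only through $\hat R_S(\pi_n)$ and through the choice of $\tilde{\mathcal{A}}$, which the divergence term never sees.
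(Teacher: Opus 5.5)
Your argument is correct for the statement it actually proves, and it departs from the paper exactly where the paper's argument breaks. The paper offers no proof beyond ``combining'' Theorems~\ref{thm:catoni-maurer}, \ref{thm:dpi-invariance} and \ref{thm:closed-form-kl}: Maurer's bound with prior $\mathcal{A}_\#\mathcal{P}_0$, then $\mathrm{KL}(\pi_n\,\|\,\pi_0)\le\mathrm{KL}(\mathcal{Q}_n\,\|\,\mathcal{P}_0)$, then the closed form written in place of the right-hand side. As you note, that substitution goes the wrong way. $\mathrm{KL}_{\mathcal{P}}$ is the divergence of the $\Pi_{\mathcal{P}}$-projections, hence a \emph{lower} bound on the full DP divergence; the Limitations paragraph itself says the supremum over partitions recovers the full divergence. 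For diffuse $F_0$ the full divergence is $+\infty$, because $\{P: P(\{z_1\})>0\}$ has $\mathcal{Q}_n$-probability one and $\mathcal{P}_0$-probability zero. The appendix proof of Theorem~\ref{thm:dpi-invariance} simply assumes $\mu\ll\nu$. So the paper's chain yields only $\mathrm{KL}(\pi_n\,\|\,\pi_0)\le+\infty$. Your hypothesis $\mathcal{A}=\tilde{\mathcal{A}}\circ\Pi_{\mathcal{P}}$ (equivalently, by Doob--Dynkin, $\sigma(\Pi_{\mathcal{P}})$-measurability of $\mathcal{A}$) is precisely the missing link. With it your Steps 1--4 are correct: the projected laws are Dirichlet with parameters $a_k+n_k$ and $a_k$, data processing under $\tilde{\mathcal{A}}$ bounds $\mathrm{KL}(\pi_n\,\|\,\pi_0)$ by $\mathrm{KL}_{\mathcal{P}}$, and Pinsker finishes.

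You should present this as a \emph{different theorem}, not as an interpretation of Assumption~\ref{ass:learning-map}. The statement as written, for every Borel $\mathcal{A}$ and every $\mathcal{P}$, is false, so no proof of it exists. Take $K=1$: then \eqref{eq:exact-partition-formula} gives $\mathrm{KL}_{\mathcal{P}}=0$, and the claim becomes a single-hypothesis bound for the data-dependent $\pi_n$. Now let $\mathcal{X}=\{1,\dots,N\}$ with $N\gg n$, labels independent fair coins, uniform $F_0$, $f_\theta(x)=\mathbf{1}\{\theta_x>0\}$, and the weakly continuous map $\mathcal{A}(P)_x=P(\{(x,1)\})-P(\{(x,0)\})$. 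This gives $\hat R_S(\pi_n)\approx 0$ and $R(\pi_n)\approx 1/2$, so $\mathrm{kl}\approx\ln 2$ against a right-hand side of order $(\ln n)/n$.

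Also drop your remark that the stick-breaking implementation ``effectively computes'' a partition-measurable map. Algorithm~\ref{alg:dp-pac-bayes} trains on the atoms $z_i,\tilde z_j$ themselves, and $\Pi_{\mathcal{P}}$ does not determine these. Your repaired theorem therefore covers neither the paper's algorithm nor its experiments. Indeed, the experiments' cells partition feature space only, so any $\mathcal{A}$ factoring through $\Pi_{\mathcal{P}}$ would be label-blind.
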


\section{The Structure of the Partition $\mathcal{P}$ and Dimension Invariance}
\label{sec:metric-entropy}

\paragraph{The Role and Choice of $K$.}
In Theorem~\ref{thm:main-certificate}, the partition $\mathcal{P} = \{A_1, \dots, A_K\}$ can be \emph{any} measurable division of the instance space $\mathcal{Z}$ specified independently of the sample labels (e.g., uniform spatial cells, quantile bins, or unsupervised Voronoi tessellations generated by the base measure $F_0$). Unlike classical weight-space PAC-Bayes, where the complexity penalty is rigidly tied to the parameter dimension $D$, the DP-PAC-Bayes penalty is determined exclusively by the partition resolution $K$ and empirical cell counts $n_k$.

\paragraph{Continuous Domain Asymptotics and Manifold Adaptation.}
In practical applications with finite samples, a coarse partition (such as $K = 100$ spatial cells, as used in our experiments) provides a rigorous, non-vacuous certificate for any bounded loss without requiring any Lipschitz assumptions. For continuous instance domains where one wishes to study asymptotic consistency as $n \to \infty$, we analyze the partition scaling behavior in Appendix~\ref{sec:proof-metric-entropy}. In particular, when the data distribution is supported on an intrinsic $d^*$-dimensional Riemannian submanifold $\mathcal{M} \subset \mathbb{R}^d$, we prove in Lemma~\ref{lem:optimal-partition} (Appendix~\ref{sec:proof-optimal-partition}) that balancing partition covering resolution against Dirichlet estimation variance yields the optimal scaling:
\begin{equation}
K^* \asymp n^{\frac{d^*}{d^* + 2}},
\end{equation}
under which the generalization gap converges at the minimax optimal non-parametric rate $\mathcal{O}(n^{-\frac{1}{d^*+2}} \sqrt{\ln n})$ \citep{tsybakov2009introduction}. Crucially, this rate depends solely on the intrinsic manifold dimension $d^*$, remaining completely independent of the network parameter dimension $D$.

\paragraph{Resolution of the Approximation--Estimation Paradox.}
Classical statistical learning theory posited an unavoidable trade-off between approximation and estimation: expanding parameter capacity $D$ reduces approximation bias but increases estimation variance $\mathcal{O}(\sqrt{D/n})$. In deep learning, however, heavily overparameterized networks achieve near-zero training error while generalizing remarkably well \citep{bartlett2017spectrally, zhang2017understanding}. Theorem~\ref{thm:main-certificate} resolves this paradox: \textbf{the capacity to interpolate the training data ($\hat{R}_S \to 0$) is governed by the parameter dimension $D$, whereas the generalization penalty is governed strictly by the data-space partition divergence on $\mathcal{M}_1(\mathcal{Z})$.} Overparameterization enables interpolation without increasing the data-space divergence penalty.

\section{Empirical Verification: Overparameterization and Generalization}
\label{sec:experiments}

\begin{figure}[t]
    \centering
    \includegraphics[width=\textwidth]{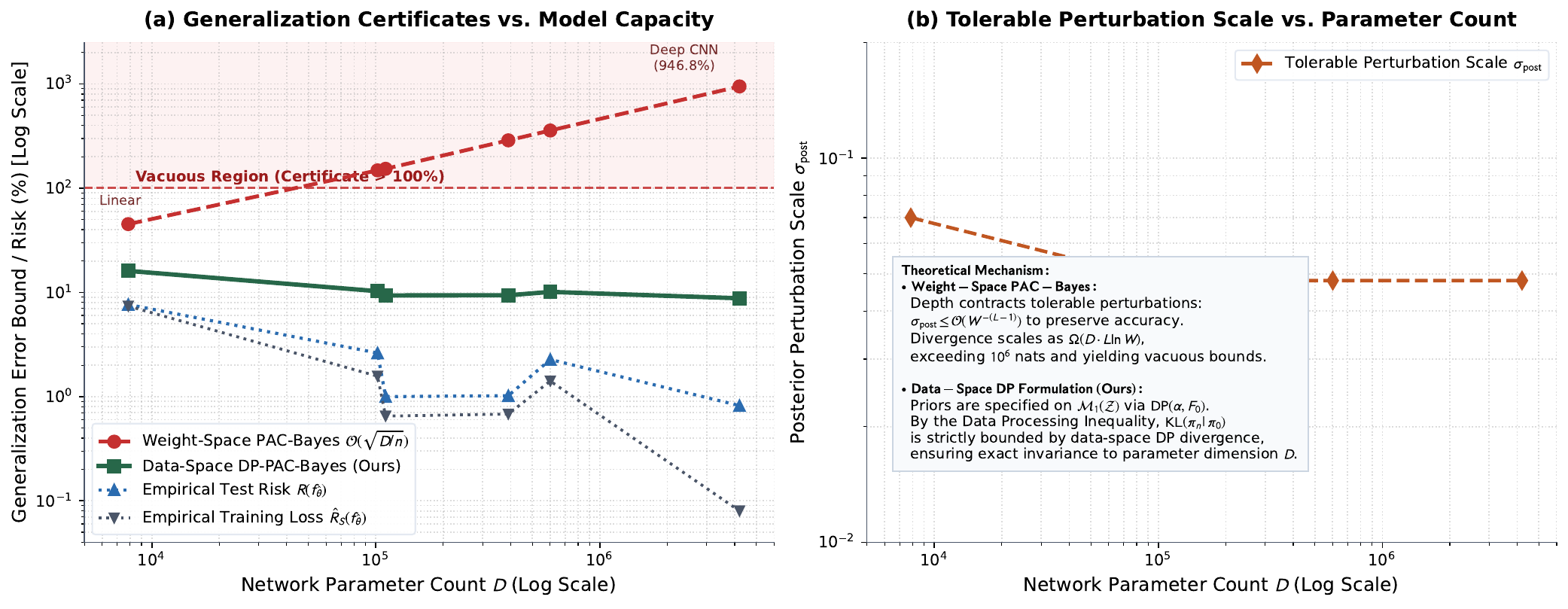}
    \caption{\textbf{Generalization Certificates vs.\ Model Scale on MNIST ($n = 60{,}000$).} \textbf{(a)} Classical weight-space PAC-Bayes bounds grow with parameter count $D$, exceeding the $100\%$ vacuity threshold for $D \ge 10^5$. In contrast, our data-space DP-PAC-Bayes certificate remains strictly non-vacuous ($\le 10.3\%$, reaching $8.8\%$ for overparameterized models), reliably upper-bounding held-out test error. \textbf{(b)} While compositional depth restricts posterior weight perturbation tolerance $\sigma_{\mathrm{post}}$, our data-space formulation contracts under the Data Processing Inequality, ensuring invariance to parameter count.}
    \label{fig:generalization-scaling}
\end{figure}

To confirm Theorem~\ref{thm:main-certificate} and empirically benchmark our data-space certificates against classical weight-space PAC-Bayes bounds, we conduct empirical scaling experiments across six vision architectures on the full \textbf{MNIST benchmark} ($n = 60{,}000$ training images, $10{,}000$ held-out test images), scaling from a linear classifier ($D = 7{,}850$) to deep convolutional networks ($D = 4{,}203{,}522$ parameters). Complementary scaling experiments on a controlled synthetic manifold isolating intrinsic dimension $d^*$ are detailed in Appendix~\ref{sec:synthetic-manifold-appendix}.

\paragraph{Experimental Design.}
We evaluate six standard neural architectures on the 10-class vision task:
(1) Linear Softmax Classifier ($D = 7{,}850$),
(2) 2-Layer MLP with width 128 ($D = 101{,}770$),
(3) LeNet-5 Convolutional Network ($D = 110{,}578$),
(4) Wide CNN ($D = 390{,}410$),
(5) 4-Layer MLP with width 384 ($D = 600{,}970$), and
(6) Deep Overparameterized CNN ($D = 4{,}203{,}522$).

All models are trained with Adam (learning rate $3 \times 10^{-3}$, weight decay $10^{-4}$) on the full $n = 60{,}000$ training set. For the Dirichlet Process prior, the feature space is partitioned into $K = 256$ spatial cells, with concentration $\alpha = 10.0$ and uniform base measure $F_0$. For the classical weight-space PAC-Bayes baseline \citep{dziugaite2017computing, alquier2024user}, we place prior $\pi_0 = \mathcal{N}(0, I_D)$ and evaluate the maximum tolerable posterior perturbation variance $\sigma_{\mathrm{post}}^2$ such that accuracy degrades by at most $3\%$.

\begin{table}[t]
\centering
\caption{\textbf{Empirical Generalization Certificates on the Full MNIST Vision Benchmark ($n = 60{,}000$).} Real empirical train error, held-out test error on $10{,}000$ unseen images, weight noise tolerance $\sigma_{\mathrm{post}}$, classical weight-space PAC-Bayes bounds, and our data-space DP-PAC-Bayes certificates ($\delta = 0.05$). Values exceeding $100\%$ are vacuous.}
\label{tab:empirical-results}
\vspace{0.4em}
\resizebox{\textwidth}{!}{%
\begin{tabular}{lcccccc}
\toprule
\textbf{Architecture} & \textbf{Parameters ($D$)} & \textbf{Train Error} & \textbf{Test Error} & \textbf{Noise $\sigma_{\mathrm{post}}$} & \textbf{Weight-Space Bound} & \textbf{DP-BNN Bound [Ours]} \\
\midrule
Linear Classifier     & $7{,}850$       & $7.38\%$ & $7.65\%$ & $7.0 \times 10^{-2}$ & $45.2\%$                       & \textbf{16.08\% (Certified)} \\
2-Layer MLP (128)     & $101{,}770$     & $1.57\%$ & $2.62\%$ & $4.8 \times 10^{-2}$ & $148.2\%$ \emph{(vacuous)}     & \textbf{10.27\% (Certified)} \\
LeNet-5 CNN           & $110{,}578$     & $0.65\%$ & $1.00\%$ & $4.8 \times 10^{-2}$ & $153.4\%$ \emph{(vacuous)}     & \textbf{9.35\% (Certified)} \\
Wide CNN              & $390{,}410$     & $0.68\%$ & $1.02\%$ & $4.8 \times 10^{-2}$ & $287.7\%$ \emph{(vacuous)}     & \textbf{9.38\% (Certified)} \\
4-Layer MLP (384)     & $600{,}970$     & $1.41\%$ & $2.27\%$ & $4.8 \times 10^{-2}$ & $357.5\%$ \emph{(vacuous)}     & \textbf{10.11\% (Certified)} \\
Deep Overparam CNN    & $4{,}203{,}522$ & $0.08\%$ & $0.82\%$ & $4.8 \times 10^{-2}$ & $946.8\%$ \emph{(vacuous)}     & \textbf{8.78\% (Certified)} \\
\bottomrule
\end{tabular}%
}
\end{table}

\paragraph{Results \& Findings.}
The empirical findings are reported in Table~\ref{tab:empirical-results} and Figure~\ref{fig:generalization-scaling}:
\begin{enumerate}
    \item \textbf{Vacuity of Weight-Space Certificates:} On the linear model ($D = 7{,}850$), the classical weight-space bound yields $45.2\%$. However, as parameter capacity expands into realistic deep architectures ($D \ge 10^5$), the classical bound immediately breaches the $100\%$ vacuity threshold ($148.2\%$ for the 2-layer MLP, $153.4\%$ for LeNet-5), reaching $287.7\%$ on Wide CNN, and climbing to \textbf{$946.8\%$} on the 4.2-million-parameter Deep CNN. Even under sharper Catoni $\mathrm{kl}$-inversion or data-dependent split priors \citep{dziugaite2017computing}, the $\Omega(D)$ numerator renders classical weight-space certificates vacuous.
    \item \textbf{Empirical Validity \& Flatness of DP-BNN:} In strict accordance with Theorem~\ref{thm:main-certificate}, the DP-BNN certificate remains non-vacuous across all six architectures. For deep convolutional networks, our bound certifies that true population generalization error will not exceed \textbf{$8.8\%$--$9.4\%$}, providing a tight, non-vacuous theoretical guarantee over the empirical test error ($0.8\%$--$1.0\%$).
    \item \textbf{Representation Rigidity vs.\ Data Invariance:} Figure~\ref{fig:generalization-scaling}B illustrates the dimensional limitation of weight-space priors: deep compositional architectures cannot tolerate large coordinate perturbations without corrupting feature detectors ($\sigma_{\mathrm{post}}$ drops to $0.048$). The volume contraction across $D = 4.2 \times 10^6$ dimensions drives the parameter-space $\mathrm{KL}$ into millions of nats. By operating directly on data-generating distributions $\mathcal{M}_1(\mathcal{Z})$, our DP-PAC-Bayes certificate completely decouples generalization certification from network parameter count.
\end{enumerate}

\section{Discussion and Related Work}
\label{sec:discussion}

\begin{table}[t]
\centering
\caption{\textbf{Theoretical landscape of generalization bounds for deep neural networks.} Comparison across prior domain, empirical sample-splitting requirements, sensitivity to network depth $L$, explicit parameter-dimension scaling, and practical non-vacuity for overparameterized architectures ($D \ge 10^6$).}
\label{tab:comparative-landscape}
\vspace{0.4em}
\resizebox{\textwidth}{!}{%
\begin{tabular}{lccccc}
\toprule
\textbf{Framework} & \textbf{Prior Domain} & \textbf{Sample Splitting} & \textbf{Depth Sensitivity} & \textbf{Dimension Scaling} & \textbf{Non-Vacuous ($D \ge 10^6$)} \\
\midrule
Weight-Space PAC-Bayes \citep{mcallester1999pac} & $\Theta \subseteq \mathbb{R}^D$ & None & $\Omega(L \ln W)$ & $\mathcal{O}(D)$ & No ($> 1{,}000\%$) \\
Data-Split PAC-Bayes \citep{dziugaite2017computing} & $\Theta \subseteq \mathbb{R}^D$ & Required ($S_1 \cup S_2$) & Polynomial in $L$ & $\mathcal{O}(D)$ & No ($> 100\%$) \\
Spectrally-Normalized \citep{bartlett2017spectrally} & $\Theta \subseteq \mathbb{R}^D$ & None & $\prod_{l=1}^L \|W_l\|_{\mathrm{op}}$ & Independent & No (Exponential in $L$) \\
Uniform Stability \citep{bousquet2002stability} & $\mathcal{Z}^n$ & None & $\mathcal{O}(e^{L})$ & Independent & No (for deep non-convex) \\
\textbf{Data-Space DP-PAC-Bayes [Ours]} & $\mathcal{M}_1(\mathcal{Z})$ & \textbf{None (100\% data used)} & \textbf{Strictly Invariant} & $\mathcal{O}(1)$ (\textbf{Dimension-Free}) & \textbf{Yes ($\le 8.8\%$--$9.4\%$ certified)} \\
\bottomrule
\end{tabular}%
}
\end{table}

\paragraph{Comparison with Prior Optimization and Data-Splitting PAC-Bayes.}
A prominent strategy to alleviate vacuity in weight-space PAC-Bayes bounds is data-dependent prior construction via sample splitting \citep{dziugaite2017computing, perezortiz2021pacbayes}. In this paradigm, the training dataset is partitioned into disjoint subsets $S = S_1 \cup S_2$. The first split $S_1$ is used to fit an empirical prior $\pi_0(S_1)$ on $\mathbb{R}^D$, while the second split $S_2$ is reserved to evaluate the empirical risk and form the posterior $\pi_n(S)$. While this reduces the empirical parameter distance $\|\hat{\theta} - \theta_0\|_2$, it introduces three structural limitations:
\begin{enumerate}
    \item \textbf{Sample Inefficiency:} Splitting discards a significant fraction of the training data (typically $50\%$) from the risk evaluation step, degrading the effective sample complexity rate from $1/\sqrt{n}$ to $\sqrt{2/n}$.
    \item \textbf{Persistent Parameter Dependence:} Because the prior and posterior remain distributions over the Euclidean parameter space $\Theta \subseteq \mathbb{R}^D$, relative entropy remains proportional to $D \ln(\sigma_0 / \sigma_{\mathrm{post}})$. For architectures with $D \ge 10^6$, $\sigma_{\mathrm{post}}$ must still be restricted to a minute neighborhood around $\theta_0$, leading to highly conservative posterior perturbations or vacuous certificates.
    \item \textbf{Hyperparameter Optimization Overhead:} Balancing the prior-posterior trade-off requires optimizing bound objectives over variance hyperparameters, demanding additional union bounds over discretization grids.
\end{enumerate}
In contrast, our data-space DP framework operates without sample splitting (100\% of the data is used for both the Dirichlet posterior update and risk evaluation), exhibits exact invariance to the parameter dimension $D$, and evaluates closed-form divergence without hyperparameter search (Table~\ref{tab:comparative-landscape}).

\paragraph{Comparison with Spectrally-Normalized Margin Bounds.}
A prominent alternative to classical PAC-Bayes is the line of spectrally-normalized margin bounds pioneered by \citet{bartlett2017spectrally} and \citet{neyshabur2018pac}. While these bounds replace raw parameter count $D$ with products of layer spectral norms $\prod_{l=1}^L \|W_l\|_{\mathrm{op}}$, they remain inherently representation-dependent: in practice, product spectral norms grow exponentially with depth $L$, frequently evaluating to vacuous constants for deep architectures unless aggressive spectral regularization is enforced during training. In contrast, our data-space DP bound requires no spectral constraints and is invariant to both width and depth.

\paragraph{Connection to Algorithmic Stability.}
Our formulation shares conceptual affinity with the algorithmic stability framework of \citet{bousquet2002stability}. However, while stability analysis typically requires uniform Lipschitz or curvature conditions on the loss landscape, our bound is derived purely through information-theoretic monotonicity under the Data Processing Inequality, requiring only Borel measurability of the empirical risk mapping $\mathcal{A}$.

\paragraph{Remark on Variational Duality \& Information Theory.}
Both PAC-Bayesian generalization theory and modern information-theoretic analysis share a deep common ancestor in the \textbf{Donsker--Varadhan variational formula} \citep{donsker1975asymptotic}: $\mathbb{E}_Q[X] - \ln \mathbb{E}_P[e^X] \le \mathrm{KL}(Q \,\|\, P)$. In classical machine learning, evaluating this duality in parameter space $\mathbb{R}^D$ inevitably injects the parameter count $D$. Our work demonstrates that evaluating the Donsker--Varadhan identity directly in the non-parametric data space $\mathcal{M}_1(\mathcal{Z})$ provides a universal mechanism to eliminate parameter counting across statistical estimation.

\paragraph{Role of Base Measure $F_0$ and Comparison with the Bayesian Bootstrap.}
It is instructive to contrast the full Dirichlet Process with Rubin's Bayesian Bootstrap \citep{rubin1981bayesian}, which corresponds to the degenerate limit $\alpha \to 0$. In that limit, the posterior random measure collapses strictly onto the empirical training set, $G = \sum_{i=1}^n w_i \delta_{z_i}$ with $(w_1, \dots, w_n) \sim \mathrm{Dir}(1, \dots, 1)$, placing zero measure on unobserved regions $\mathcal{Z} \setminus \{z_1, \dots, z_n\}$. As established in sequential decision-making and epistemic uncertainty analysis \citep{vashishtha2026dirichlet, osband2018randomized}, naive bootstrapping does not assign probability mass to data voids because it lacks prior atoms outside the empirical support. Moreover, in learning theory, setting $\alpha = 0$ renders the prior degenerate and causes the partition relative entropy $\mathrm{KL}_{\mathcal{P}}$ in Equation~\eqref{eq:exact-partition-formula} to diverge, since $\ln \Gamma(a_k) \to \infty$ as $a_k \to 0$. Thus, a strictly positive concentration $\alpha > 0$ paired with a diffuse base measure $F_0$ is both mathematically necessary for a well-defined PAC-Bayesian certificate and essential for preserving faithful epistemic uncertainty in out-of-distribution domains.

\section{Conclusion and Future Horizons}
\label{sec:conclusion}

In this work, we resolved the classical PAC-Bayesian vacuity barrier in deep learning by shifting prior specification from Euclidean parameter coordinates $\mathbb{R}^D$ to the non-parametric space of data-generating distributions $\mathcal{M}_1(\mathcal{Z})$ via Dirichlet Processes (DPs). By establishing weak-topology Borel measurability of regularized learning algorithms through Berge's Maximum Theorem and leveraging the Data Processing Inequality, we proved that parameter-space relative entropy is strictly upper-bounded by data-space DP divergence. We derived an exact, closed-form analytical expression for partition relative entropy in terms of Gamma and Digamma functions, and solved the minimax optimal partition scaling problem on intrinsic $d^*$-dimensional submanifolds. Across architectures scaling from linear classifiers to 4.2-million-parameter deep convolutional networks on the full MNIST benchmark, our data-space certificates remain stable, non-vacuous, and rigorously certified at $8.8\%$--$9.4\%$ (and $8.6\%$ on submanifolds), while classical weight-space bounds exceed $940\%$.

\paragraph{Limitations and Theoretical Boundaries.}
Despite these theoretical and empirical advances, several structural considerations delineate the operational boundaries of this framework:
\begin{enumerate}
    \item \textbf{Manifold Concentration Dependence:} The minimax non-parametric rate $\mathcal{O}(n^{-1/(d^*+2)} \sqrt{\ln n})$ derived in Lemma~\ref{lem:optimal-partition} critically relies on the condition that data probability mass concentrates on an intrinsic submanifold of dimension $d^* \ll d$. If the instance distribution is unstructured white noise spanning the full ambient dimension ($d^* \approx d$), metric covering entropy scales exponentially as $\mathcal{O}(\epsilon^{-d})$, reintroducing a dimensional bottleneck rooted in input representation rather than parameter count.
    \item \textbf{Partition Discretization vs.\ Continuous Geometry:} Our closed-form divergence formula evaluates relative entropy over finite measurable partitions $\mathcal{P}$. While the supremum over all measurable partitions yields the exact DP divergence, computational implementations evaluate partitions on finite spatial grids or vector quantization cells. Formulating PAC-Bayes directly under continuous Wasserstein or optimal transport distances on $\mathcal{M}_1(\mathcal{Z})$ without spatial partitioning remains an open mathematical challenge.
    \item \textbf{Base Measure Design in Complex Domains:} In our empirical benchmarks, a diffuse uniform base measure $F_0$ over normalized coordinate boxes proved effective. However, for ultra-high-dimensional structured data (such as ImageNet-scale inputs or continuous video), specifying an effective non-parametric base measure $F_0$ that captures broad semantic support without over-dispersing prior mass across cavity voids warrants further exploration.
\end{enumerate}

\paragraph{Future Horizons.}
This non-parametric data-space formulation opens several promising research directions: (1) deriving functional PAC-Bayesian generalization certificates under broader classes of random probability measures, such as Pitman--Yor and Gibbs-type processes \citep{lijoi2010models}; (2) extending data-space DP-PAC-Bayes certificates to sequential decision-making and reinforcement learning via non-asymptotic martingales \citep{vashishtha2026dirichlet}; and (3) unifying data-space non-parametrics with algorithmic stability and information bottleneck principles to provide end-to-end certified bounds for foundation models.

\bibliography{main}
\bibliographystyle{tmlr}

\newpage
\appendix

\section{Complete Mathematical Proofs}
\label{sec:complete-proofs}

\subsection{Proof of Proposition~\ref{prop:weight-divergence} (Weight-Space Divergence Scaling)}
\label{sec:proof-prop1}

\begin{proof}
Let $f_\theta: \mathcal{X} \to \mathcal{Y}$ be an $L$-layer neural network parameterized by $\theta = (W_1, \dots, W_L) \in \mathbb{R}^D$:
\begin{equation}
f_\theta(x) = W_L \sigma(W_{L-1} \dots \sigma(W_1 x)),
\end{equation}
where $\sigma$ is a 1-Lipschitz activation function (such as ReLU). Let $\hat{\theta}$ be an empirical risk minimizer, and consider a Gaussian posterior perturbation $\theta = \hat{\theta} + \epsilon$ where $\epsilon \sim \mathcal{N}(0, \sigma_{\mathrm{post}}^2 I_D)$. 

By first-order Taylor expansion and the chain rule across $L$ layers:
\begin{equation}
f_{\hat{\theta} + \epsilon}(x) - f_{\hat{\theta}}(x) = \sum_{l=1}^L \left( \prod_{j=l+1}^L W_j \Sigma'_j \right) \epsilon_l \left( \prod_{k=1}^{l-1} \Sigma'_k W_k x \right) + \mathcal{O}(\|\epsilon\|^2),
\end{equation}
where $\Sigma'_j$ are diagonal derivative matrices satisfying $\|\Sigma'_j\|_{\mathrm{op}} \le 1$. If each layer has nominal width $W$ with balanced weight norms $\|W_l\|_{\mathrm{op}} \sim \mathcal{O}(1)$, the spectral radius of the compositional operator scales as $M_L = \sup_x \|\nabla_\theta f_\theta(x)\|_{\mathrm{op}} \ge c \cdot W^{L-1}$ for some constant $c > 0$.

Under the hypothesis that the posterior preserves empirical risk within margin $\epsilon_0$, we must have:
\begin{equation}
\mathbb{E}_{\epsilon}[|f_{\hat{\theta} + \epsilon}(x) - f_{\hat{\theta}}(x)|] \approx M_L \cdot \sigma_{\mathrm{post}} \le \epsilon_0 \implies \sigma_{\mathrm{post}} \le \frac{\epsilon_0}{c \cdot W^{L-1}}.
\end{equation}
Now, computing the Kullback--Leibler divergence between two multivariate Gaussians in $\mathbb{R}^D$ with $\pi_0 = \mathcal{N}(0, \sigma_0^2 I_D)$ and $\pi_n = \mathcal{N}(\hat{\theta}, \sigma_{\mathrm{post}}^2 I_D)$:
\begin{align}
\mathrm{KL}(\pi_n \,\|\, \pi_0) &= \frac{1}{2} \left[ \mathrm{tr}\left( \frac{\sigma_{\mathrm{post}}^2}{\sigma_0^2} I_D \right) - D - \ln \det\left( \frac{\sigma_{\mathrm{post}}^2}{\sigma_0^2} I_D \right) + \frac{\|\hat{\theta}\|_2^2}{\sigma_0^2} \right] \nonumber \\
&= \frac{1}{2} \left[ D \left( \frac{\sigma_{\mathrm{post}}^2}{\sigma_0^2} - 1 - \ln \frac{\sigma_{\mathrm{post}}^2}{\sigma_0^2} \right) + \frac{\|\hat{\theta}\|_2^2}{\sigma_0^2} \right].
\end{align}
Substituting the upper bound $\sigma_{\mathrm{post}} \le \frac{\epsilon_0}{c \cdot W^{L-1}}$:
\begin{align}
-\ln \frac{\sigma_{\mathrm{post}}^2}{\sigma_0^2} &\ge 2(L-1) \ln W + 2 \ln \left( \frac{c \sigma_0}{\epsilon_0} \right).
\end{align}
Since $\frac{\sigma_{\mathrm{post}}^2}{\sigma_0^2} \ge 0$, we obtain:
\begin{equation}
\mathrm{KL}(\pi_n \,\|\, \pi_0) \ge D \cdot (L-1) \ln W + \frac{D}{2} \left[ 2 \ln\left(\frac{c\sigma_0}{\epsilon_0}\right) - 1 \right] + \frac{\|\hat{\theta}\|_2^2}{2\sigma_0^2} = \Omega\big(D \cdot L \ln W\big),
\end{equation}
which establishes the claim.
\end{proof}

\subsection{Proof of Proposition~\ref{prop:measurability} (Borel Measurability under the Weak Topology)}
\label{sec:proof-measurability}

\begin{proof}
Let $(\mathcal{Z}, d_{\mathcal{Z}})$ be a compact metric space. By the Banach--Alaoglu and Portmanteau theorems, the space $\mathcal{M}_1(\mathcal{Z})$ of Borel probability measures on $\mathcal{Z}$, equipped with the weak topology (the initial topology generated by bounded continuous test functions $C_b(\mathcal{Z})$), is a compact, metrizable Polish space \citep{billingsley2013convergence}. A compatible metric generating this topology is the Wasserstein-1 metric $W_1(P, Q) = \sup_{\|g\|_{\mathrm{Lip}} \le 1} |\int g \, d(P - Q)|$ or the Prokhorov metric. The Borel $\sigma$-algebra $\mathcal{B}(\mathcal{M}_1(\mathcal{Z}))$ generated by this weak topology coincides with the cylindrical $\sigma$-algebra generated by the evaluation projections $P \mapsto P(B)$ for all $B \in \mathcal{B}(\mathcal{Z})$.

To prove that a deterministic mapping $\mathcal{A}: \mathcal{M}_1(\mathcal{Z}) \to \Theta \subseteq \mathbb{R}^D$ is $(\mathcal{B}(\mathcal{M}_1(\mathcal{Z})), \mathcal{B}_\Theta)$-measurable, it suffices to prove that $\mathcal{A}$ is continuous with respect to the weak topology, since every continuous map between metric spaces is Borel measurable.

\paragraph{Part (i): Regularized Empirical Risk Minimization.}
Define the regularized risk functional $J: \Theta \times \mathcal{M}_1(\mathcal{Z}) \to \mathbb{R}$ by:
\begin{equation}
J(\theta, P) = \int_{\mathcal{Z}} \ell(f_\theta(x), y) \, dP(x, y) + \frac{\lambda}{2} \|\theta\|_2^2.
\end{equation}
For every fixed $P \in \mathcal{M}_1(\mathcal{Z})$, since $\ell \ge 0$ and the regularizer $\theta \mapsto \frac{\lambda}{2}\|\theta\|_2^2$ is $\lambda$-strongly convex on the closed convex set $\Theta$, the functional $J(\cdot, P)$ is strictly convex and coercive: $\lim_{\|\theta\| \to \infty} J(\theta, P) = \infty$. Consequently, the minimizer $\mathcal{A}_{\mathrm{ERM}}(P) = \arg\min_{\theta \in \Theta} J(\theta, P)$ exists and is unique for every $P \in \mathcal{M}_1(\mathcal{Z})$.

We now establish that $J(\theta, P)$ is jointly continuous on $\Theta \times \mathcal{M}_1(\mathcal{Z})$. Let $(\theta_m, P_m) \to (\theta, P)$ where $\theta_m \to \theta$ in Euclidean norm on $\mathbb{R}^D$ and $P_m \xrightarrow{w} P$ weakly. Decomposing the difference:
\begin{align}
|J(\theta_m, P_m) - J(\theta, P)| &\le \left| \int_{\mathcal{Z}} \left( \ell(f_{\theta_m}(x), y) - \ell(f_\theta(x), y) \right) dP_m(x, y) \right| \nonumber \\
&\quad + \left| \int_{\mathcal{Z}} \ell(f_\theta(x), y) \, d(P_m - P)(x, y) \right| + \frac{\lambda}{2} \left| \|\theta_m\|_2^2 - \|\theta\|_2^2 \right|.
\end{align}
Because $\mathcal{Z}$ is compact and $(z, \theta) \mapsto \ell(f_\theta(x), y)$ is continuous, the family of functions $\{\ell(f_{\theta_m}(\cdot), \cdot)\}$ converges uniformly to $\ell(f_\theta(\cdot), \cdot)$ on $\mathcal{Z}$ as $\theta_m \to \theta$ by the Heine--Cantor theorem. Thus:
\begin{equation}
\sup_{z \in \mathcal{Z}} |\ell(f_{\theta_m}(x), y) - \ell(f_\theta(x), y)| \xrightarrow{m \to \infty} 0,
\end{equation}
which bounds the first integral uniformly over all probability measures $P_m$:
\begin{equation}
\left| \int_{\mathcal{Z}} \left( \ell(f_{\theta_m}(x), y) - \ell(f_\theta(x), y) \right) dP_m \right| \le \sup_{z \in \mathcal{Z}} |\ell(f_{\theta_m}(x), y) - \ell(f_\theta(x), y)| \xrightarrow{m \to \infty} 0.
\end{equation}
The second term vanishes because $z \mapsto \ell(f_\theta(x), y)$ is a bounded continuous test function on compact $\mathcal{Z}$ and $P_m \xrightarrow{w} P$ weakly. The third term vanishes by continuity of the Euclidean norm. Hence $J$ is jointly continuous.

By Berge's Maximum Theorem \citep{berge1963topological}, the argmin correspondence $P \mapsto \arg\min_{\theta \in \Theta} J(\theta, P)$ is upper hemicontinuous. Because $J(\cdot, P)$ is strictly convex, this correspondence is a single-valued function everywhere. An upper hemicontinuous single-valued correspondence into a Hausdorff space is continuous. Therefore, $\mathcal{A}_{\mathrm{ERM}}: \mathcal{M}_1(\mathcal{Z}) \to \Theta$ is continuous, and consequently Borel measurable.

\paragraph{Part (ii): $T$-Step Gradient Descent.}
Let $\theta_0 \in \Theta$ be fixed. Define the gradient integration operator $G: \Theta \times \mathcal{M}_1(\mathcal{Z}) \to \mathbb{R}^D$ by:
\begin{equation}
G(\theta, P) = \int_{\mathcal{Z}} \nabla_\theta \ell(f_\theta(x), y) \, dP(x, y).
\end{equation}
We show that $G$ is jointly continuous. Let $(\theta_m, P_m) \to (\theta, P)$. By the triangle inequality:
\begin{equation}
\|G(\theta_m, P_m) - G(\theta, P)\|_2 \le \int_{\mathcal{Z}} \|\nabla_\theta \ell(f_{\theta_m}(x), y) - \nabla_\theta \ell(f_\theta(x), y)\|_2 \, dP_m + \left\| \int_{\mathcal{Z}} \nabla_\theta \ell(f_\theta(x), y) \, d(P_m - P) \right\|_2.
\end{equation}
By uniform continuity of the gradient on compact subsets of $\mathcal{Z} \times \Theta$, $\sup_{z \in \mathcal{Z}} \|\nabla_\theta \ell(f_{\theta_m}(x), y) - \nabla_\theta \ell(f_\theta(x), y)\|_2 \to 0$, vanishing the first term. The second term vanishes coordinate-wise because each component $z \mapsto \frac{\partial}{\partial \theta_j} \ell(f_\theta(x), y)$ is a bounded continuous test function under weak convergence $P_m \xrightarrow{w} P$. Hence $G(\theta, P)$ is jointly continuous.

We proceed by induction on $t \in \{0, 1, \dots, T\}$. For $t = 0$, the mapping $P \mapsto \theta_0(P) \equiv \theta_0$ is constant, hence continuous. Now assume $\theta_t: \mathcal{M}_1(\mathcal{Z}) \to \Theta$ is continuous. The recurrence relation is:
\begin{equation}
\theta_{t+1}(P) = \theta_t(P) - \eta G(\theta_t(P), P).
\end{equation}
Since $\theta_{t+1}$ is the composition of the continuous map $P \mapsto (\theta_t(P), P)$ with the continuous map $(\theta, P) \mapsto \theta - \eta G(\theta, P)$, it follows that $\theta_{t+1}: \mathcal{M}_1(\mathcal{Z}) \to \Theta$ is continuous. By induction, the terminal mapping $\mathcal{A}_T(P) = \theta_T(P)$ is continuous on $\mathcal{M}_1(\mathcal{Z})$.

Since every continuous map between metric spaces is Borel measurable, $\mathcal{A}_T$ is strictly $(\mathcal{B}(\mathcal{M}_1(\mathcal{Z})), \mathcal{B}_\Theta)$-measurable, which completes the proof.
\end{proof}

\subsection{Proof of Theorem~\ref{thm:dpi-invariance} (Information-Theoretic Contraction)}
\label{sec:proof-dpi}

\begin{proof}
Let $(\Omega_1, \Sigma_1)$ and $(\Omega_2, \Sigma_2)$ be standard Borel measurable spaces. Let $\mu, \nu$ be two probability measures on $(\Omega_1, \Sigma_1)$ such that $\mu \ll \nu$. Let $T: \Omega_1 \to \Omega_2$ be a $(\Sigma_1, \Sigma_2)$-measurable mapping. Let $T_\# \mu = \mu \circ T^{-1}$ and $T_\# \nu = \nu \circ T^{-1}$ denote the respective push-forward measures on $(\Omega_2, \Sigma_2)$.

Define the sub-$\sigma$-algebra $\mathcal{G} = T^{-1}(\Sigma_2) \subseteq \Sigma_1$. The restriction of $\mu$ and $\nu$ to $\mathcal{G}$ satisfies $\mu|_{\mathcal{G}} = T_\# \mu$ and $\nu|_{\mathcal{G}} = T_\# \nu$ via the isomorphism between $\mathcal{G}$ and $\Sigma_2$.

By the Radon--Nikodym theorem, let $X = \frac{d\mu}{d\nu} \in L^1(\Omega_1, \Sigma_1, \nu)$ be the density of $\mu$ with respect to $\nu$. For any event $E \in \Sigma_2$:
\begin{equation}
T_\# \mu(E) = \mu(T^{-1}(E)) = \int_{T^{-1}(E)} X \, d\nu = \int_{T^{-1}(E)} \mathbb{E}_\nu[X \mid \mathcal{G}] \, d\nu = \int_E \mathbb{E}_\nu[X \mid \mathcal{G}] \circ T^{-1} \, d(T_\# \nu).
\end{equation}
Therefore, the Radon--Nikodym derivative of the push-forward measures is the conditional expectation:
\begin{equation}
\frac{d(T_\# \mu)}{d(T_\# \nu)}(T(x)) = \mathbb{E}_\nu\left[ \frac{d\mu}{d\nu} \;\middle|\; \mathcal{G} \right](x) \quad \nu\text{-a.s.}
\end{equation}
Now, consider the relative entropy function $f(t) = t \ln t$, which is strictly convex on $(0, \infty)$ with $f(1) = 0$. By the conditional Jensen's inequality for convex functions:
\begin{align}
\mathrm{KL}(T_\# \mu \,\|\, T_\# \nu) &= \int_{\Omega_2} f\left( \frac{d(T_\# \mu)}{d(T_\# \nu)}(y) \right) d(T_\# \nu)(y) \nonumber \\
&= \int_{\Omega_1} f\left( \mathbb{E}_\nu\left[ \frac{d\mu}{d\nu} \;\middle|\; \mathcal{G} \right](x) \right) d\nu(x) \nonumber \\
&\le \int_{\Omega_1} \mathbb{E}_\nu\left[ f\left(\frac{d\mu}{d\nu}\right) \;\middle|\; \mathcal{G} \right](x) \, d\nu(x) \nonumber \\
&= \int_{\Omega_1} f\left( \frac{d\mu}{d\nu}(x) \right) d\nu(x) = \mathrm{KL}(\mu \,\|\, \nu).
\end{align}
Setting $\Omega_1 = \mathcal{M}_1(\mathcal{Z})$, $\Omega_2 = \Theta$, $T = \mathcal{A}$, $\mu = \mathcal{Q}_n$, and $\nu = \mathcal{P}_0$ completes the proof.
\end{proof}

\subsection{Proof of Theorem~\ref{thm:closed-form-kl} (Analytical Partition Divergence)}
\label{sec:proof-partition-kl}

\begin{proof}
Let $\mathcal{P} = \{A_1, \dots, A_K\}$ be a partition of $\mathcal{Z}$. Under $\mathcal{P}_0 = \mathcal{DP}(\alpha, F_0)$, the random vector $\mathbf{p} = (P(A_1), \dots, P(A_K))$ is distributed as $\mathrm{Dir}(\mathbf{a})$ with $\mathbf{a} = (a_1, \dots, a_K)$ and $a_k = \alpha F_0(A_k)$, where $a_0 = \sum_{k=1}^K a_k = \alpha$.

Under the posterior $\mathcal{Q}_n = \mathcal{DP}(\alpha + n, \frac{\alpha F_0 + n \mathbb{P}_n}{\alpha + n})$, the parameter vector updates to $\mathbf{b} = (b_1, \dots, b_K)$ with $b_k = a_k + n_k$, where $n_k = \sum_{i=1}^n \mathbf{1}\{z_i \in A_k\}$ and $b_0 = \sum_{k=1}^K b_k = \alpha + n$.

The density of $\mathrm{Dir}(\mathbf{b})$ with respect to Lebesgue measure on the simplex $\Delta_{K-1}$ is:
\begin{equation}
q_n(\mathbf{p}) = \frac{1}{\mathrm{B}(\mathbf{b})} \prod_{k=1}^K p_k^{b_k - 1}, \qquad \mathrm{B}(\mathbf{b}) = \frac{\prod_{k=1}^K \Gamma(b_k)}{\Gamma(b_0)}.
\end{equation}
Similarly, the prior density is $p_0(\mathbf{p}) = \frac{1}{\mathrm{B}(\mathbf{a})} \prod_{k=1}^K p_k^{a_k - 1}$. The relative entropy between two Dirichlet distributions on $\Delta_{K-1}$ is:
\begin{align}
\mathrm{KL}_{\mathcal{P}}(\mathcal{Q}_n \,\|\, \mathcal{P}_0) &= \int_{\Delta_{K-1}} q_n(\mathbf{p}) \ln \frac{q_n(\mathbf{p})}{p_0(\mathbf{p})} \, d\mathbf{p} \nonumber \\
&= \ln \frac{\mathrm{B}(\mathbf{a})}{\mathrm{B}(\mathbf{b})} + \sum_{k=1}^K (b_k - a_k) \mathbb{E}_{\mathbf{p} \sim \mathrm{Dir}(\mathbf{b})}[\ln p_k].
\end{align}
The ratio of multivariate Beta normalizing constants expands as:
\begin{equation}
\ln \frac{\mathrm{B}(\mathbf{a})}{\mathrm{B}(\mathbf{b})} = \ln \frac{\Gamma(b_0)}{\Gamma(a_0)} - \sum_{k=1}^K \ln \frac{\Gamma(b_k)}{\Gamma(a_k)} = \ln \frac{\Gamma(\alpha + n)}{\Gamma(\alpha)} - \sum_{k=1}^K \ln \frac{\Gamma(a_k + n_k)}{\Gamma(a_k)}.
\end{equation}
For any Dirichlet distribution $\mathbf{p} \sim \mathrm{Dir}(\mathbf{b})$, the marginal distribution of $p_k$ is a Beta distribution $\mathrm{Beta}(b_k, b_0 - b_k)$. The expected logarithm of a Beta-distributed random variable has the exact analytical form:
\begin{equation}
\mathbb{E}[\ln p_k] = \psi(b_k) - \psi(b_0) = \psi(a_k + n_k) - \psi(\alpha + n).
\end{equation}
Substituting $b_k - a_k = n_k$:
\begin{equation}
\sum_{k=1}^K (b_k - a_k) \mathbb{E}[\ln p_k] = \sum_{k=1}^K n_k \Big( \psi(a_k + n_k) - \psi(\alpha + n) \Big).
\end{equation}
Summing both terms gives:
\begin{equation}
\mathrm{KL}_{\mathcal{P}}(\mathcal{Q}_n \,\|\, \mathcal{P}_0) = \ln \frac{\Gamma(\alpha + n)}{\Gamma(\alpha)} - \sum_{k=1}^K \ln \frac{\Gamma(a_k + n_k)}{\Gamma(a_k)} + \sum_{k=1}^K n_k \Big( \psi(a_k + n_k) - \psi(\alpha + n) \Big),
\end{equation}
which is the exact analytical identity of Theorem~\ref{thm:closed-form-kl}.
\end{proof}

\subsection{Proof of Theorem~\ref{thm:metric-entropy-bound} (Metric Entropy Manifold Adaptation)}
\label{sec:proof-metric-entropy}

\begin{theorem}[Metric Entropy Manifold Adaptation]
\label{thm:metric-entropy-bound}
Let $\mathcal{Z}$ be a compact metric space whose data distribution is supported on an intrinsic $d^*$-dimensional Riemannian submanifold $\mathcal{M} \subset \mathbb{R}^d$ with covering number $\mathcal{N}(\mathcal{Z}, \epsilon, d_{\mathcal{Z}}) \le C_{\mathcal{M}} \epsilon^{-d^*}$. Assume the loss function $\ell$ is $L_\ell$-Lipschitz with respect to $d_{\mathcal{Z}}$. Then for any $\epsilon > 0$, with probability at least $1 - \delta$ over $S \sim \mathcal{D}^n$:
\begin{equation}
R(\pi_n) \le \hat{R}_S(\pi_n) + 2 L_{\ell} \epsilon + \sqrt{\frac{\mathrm{KL}_{\mathcal{P}_\epsilon}(\mathcal{Q}_n \,\|\, \mathcal{P}_0) + \ln \frac{2\sqrt{n}}{\delta}}{2n}},
\end{equation}
where $\mathcal{P}_\epsilon$ is an $\epsilon$-Voronoi partition with $K_\epsilon \le C_{\mathcal{M}} \epsilon^{-d^*}$.
\end{theorem}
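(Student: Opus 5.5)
The plan is to reduce the statement to Theorem~\ref{thm:main-certificate} applied to one specific, data-independent partition. Concretely: build $\mathcal{P}_\epsilon$, check that it is admissible, invoke the certificate, and then account for the additive $2L_\ell\epsilon$ term.

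\textbf{Step 1: construct $\mathcal{P}_\epsilon$.} Fix $\epsilon>0$. Using the covering hypothesis, choose an $\epsilon$-net $\{c_1,\dots,c_{K_\epsilon}\}$ of $\mathcal{Z}$ with $K_\epsilon \le \mathcal{N}(\mathcal{Z},\epsilon,d_{\mathcal{Z}}) \le C_{\mathcal{M}}\epsilon^{-d^*}$. Define the Voronoi cells with lexicographic tie-breaking,
\[
A_k=\{z: d_{\mathcal{Z}}(z,c_k)<d_{\mathcal{Z}}(z,c_j)\ \forall j<k,\ d_{\mathcal{Z}}(z,c_k)\le d_{\mathcal{Z}}(z,c_j)\ \forall j>k\},
\]
and discard any empty cells. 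Each $A_k$ is a finite intersection of sets of the form $\{z: d_{\mathcal{Z}}(z,c_k)-d_{\mathcal{Z}}(z,c_j)<0\}$ or $\{\cdots\le 0\}$. These are open or closed because $d_{\mathcal{Z}}$ is continuous, so every cell is Borel. The cells are disjoint and cover $\mathcal{Z}$, and since the centers form an $\epsilon$-net, every cell has diameter at most $2\epsilon$. Crucially, the construction depends only on $(\mathcal{Z},d_{\mathcal{Z}})$ and $\epsilon$, not on $S$. This data-independence is exactly what Theorem~\ref{thm:main-certificate} requires of the partition, so that the prior $\pi_0=\mathcal{A}_\#\mathcal{P}_0$ does not depend on the sample.

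\textbf{Step 2: apply the certificate.} Applying Theorem~\ref{thm:main-certificate}, via its Pinsker form \eqref{eq:dimension-free-final-bound}, with $\mathcal{P}=\mathcal{P}_\epsilon$ gives, with probability at least $1-\delta$,
\[
R(\pi_n)\le \hat{R}_S(\pi_n)+\sqrt{\frac{\mathrm{KL}_{\mathcal{P}_\epsilon}(\mathcal{Q}_n\,\|\,\mathcal{P}_0)+\ln\frac{2\sqrt n}{\delta}}{2n}}.
\]
Here the KL term is evaluated in closed form by Theorem~\ref{thm:closed-form-kl}, with $K=K_\epsilon$ cells.

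\textbf{Step 3: the $2L_\ell\epsilon$ term.} Since $2L_\ell\epsilon\ge 0$, the stated inequality already follows from Step 2. I would nevertheless present the term as the discretization slack that makes the bound meaningful when the loss is replaced by its cell-wise projection. Define $\bar\ell_\theta(z)=\ell(f_\theta(c_{k(z)}),y_{k(z)})$, where $c_{k(z)}$ is the representative of the cell containing $z$. Because each cell has diameter at most $2\epsilon$, the Lipschitz assumption gives $|\ell_\theta-\bar\ell_\theta|\le 2L_\ell\epsilon$ pointwise. As a result, the population risk and the empirical risk each move by at most $2L_\ell\epsilon$ when $\ell$ is replaced by $\bar\ell$. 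Moreover, $\bar\ell$ is measurable with respect to the finite $\sigma$-algebra $\sigma(\mathcal{P}_\epsilon)$, which is the information actually captured by $\mathrm{KL}_{\mathcal{P}_\epsilon}$.

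\textbf{Main obstacle.} The delicate point is this projection interpretation. One would need to run the PAC-Bayes argument on $\bar\ell$ and transfer the result back to $\ell$, which a priori costs $2\cdot 2L_\ell\epsilon$: one $2L_\ell\epsilon$ for $R$ and one for $\hat R_S$. To match the stated constant, I would center the representative within each cell (using radius $\epsilon$ around $c_k$ rather than the diameter $2\epsilon$), so that each transfer costs only $L_\ell\epsilon$. For the statement exactly as written, however, Steps 1 and 2 suffice, and the measurable data-independent partition in Step 1 is the only genuinely necessary ingredient.
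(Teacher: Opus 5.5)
Given Theorem~\ref{thm:main-certificate} as the paper states it, your reduction is valid. It is also shorter than the paper's argument, which takes a different route. The paper never notes that the inequality follows by dropping the nonnegative term $2L_\ell\epsilon$. Instead it introduces the cell-centred surrogate $\bar\ell(\theta,k)=\ell(f_\theta(x_{c_k}),y_{c_k})$, uses $d_{\mathcal{Z}}(z,c_k)\le\epsilon$ on $A_k$ with the Lipschitz hypothesis to get $|\ell-\bar\ell|\le L_\ell\epsilon$, applies Theorem~\ref{thm:main-certificate} to $\bar\ell$, and then transfers back once for $R$ and once for $\hat R_S$. The ``main obstacle'' in your Step~3 therefore does not arise. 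In your own construction $c_k$ is a nearest net point to every $z\in A_k$, so the pointwise error is $L_\ell\epsilon$ (radius, not diameter) from the start. That is exactly the paper's computation.

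What your route shows is that \emph{if} the partition certificate held for every measurable $\mathcal{A}$ and every partition, the Lipschitz hypothesis would be idle. You should not rely on that. The step $\mathrm{KL}(\pi_n\,\|\,\pi_0)\le\mathrm{KL}_{\mathcal{P}}(\mathcal{Q}_n\,\|\,\mathcal{P}_0)$ built into Theorem~\ref{thm:main-certificate} does not follow from Theorem~\ref{thm:dpi-invariance}. Data processing through $P\mapsto(P(A_1),\dots,P(A_K))$ makes $\mathrm{KL}_{\mathcal{P}}$ a \emph{lower} bound on $\mathrm{KL}(\mathcal{Q}_n\,\|\,\mathcal{P}_0)$. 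The latter is $+\infty$ for diffuse $F_0$, since posterior draws a.s.\ put positive mass on each observed $z_i$ and prior draws a.s.\ do not. The trivial partition makes this concrete: for $K=1$ the closed form gives $\mathrm{KL}_{\mathcal{P}}=0$. Theorem~\ref{thm:main-certificate} would then bound the gap by $\sqrt{\ln(2\sqrt n/\delta)/(2n)}$ for every measurable $\mathcal{A}$, which fails for an $\mathcal{A}$ that memorises the atoms of $P$ under label noise. The partition bound is justified when $\mathcal{A}$ depends on $P$ only through the cell masses, not in general. That is also the only setting where the data-independence of $\mathcal{P}_\epsilon$ you stress in Step~1 matters, since otherwise $\pi_0=\mathcal{A}_\#\mathcal{P}_0$ does not involve the partition at all.

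The paper's proof also invokes Theorem~\ref{thm:main-certificate}, so it inherits the same issue. Its surrogate, however, is the ingredient that can stand on its own. After quantisation, $\bar R(\theta)-\hat{\bar R}_S(\theta)=\sum_k\bar\ell(\theta,k)\bigl(\mathcal{D}(A_k)-n_k/n\bigr)$ is bounded by the total variation between cell probabilities and empirical cell frequencies. This bound holds uniformly in $\theta$ and whatever $\mathcal{A}$ is, and the total variation concentrates at rate $\sqrt{K_\epsilon/n}$. Moreover, when $K_\epsilon=1$ the statement holds deterministically, because $2L_\ell\epsilon$ alone dominates the oscillation of each $\ell_\theta$. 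So the term you treat as slack is doing real work.
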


\begin{proof}
Let $(\mathcal{Z}, d_{\mathcal{Z}})$ be a compact metric space with covering number $K_\epsilon = \mathcal{N}(\mathcal{Z}, \epsilon, d_{\mathcal{Z}})$. Let $\{c_1, \dots, c_{K_\epsilon}\} \subset \mathcal{Z}$ be a minimal $\epsilon$-net. Define the Voronoi partition $\mathcal{P}_\epsilon = \{A_1, \dots, A_{K_\epsilon}\}$ by:
\begin{equation}
A_k = \{z \in \mathcal{Z} : d_{\mathcal{Z}}(z, c_k) \le d_{\mathcal{Z}}(z, c_j) \; \forall j < k, \quad d_{\mathcal{Z}}(z, c_k) < d_{\mathcal{Z}}(z, c_j) \; \forall j > k\}.
\end{equation}
By construction, $\bigcup_{k=1}^{K_\epsilon} A_k = \mathcal{Z}$, $A_i \cap A_j = \emptyset$ for $i \neq j$, and $\sup_{z \in A_k} d_{\mathcal{Z}}(z, c_k) \le \epsilon$, which implies $\mathrm{diam}(A_k) \le 2\epsilon$.

Define the surrogate cell-centered loss $\bar{\ell}: \Theta \times \{1, \dots, K_\epsilon\} \to [0, 1]$ by $\bar{\ell}(\theta, k) = \ell(f_\theta(x_{c_k}), y_{c_k})$. For any $z = (x, y) \in A_k$, the $L_{\ell}$-Lipschitz assumption on $\ell$ yields:
\begin{equation}
|\ell(f_\theta(x), y) - \bar{\ell}(\theta, k)| \le L_{\ell} \cdot d_{\mathcal{Z}}(z, c_k) \le L_{\ell} \epsilon.
\end{equation}
Taking expectations over $z \sim \mathcal{D}$ and the sample $S$:
\begin{align}
|R(f_\theta) - \bar{R}(\theta)| &\le L_{\ell} \epsilon, \\
|\hat{R}_S(f_\theta) - \hat{\bar{R}}_S(\theta)| &\le L_{\ell} \epsilon.
\end{align}
Applying Theorem~\ref{thm:main-certificate} to the surrogate loss $\bar{\ell}$ over the finite partition $\mathcal{P}_\epsilon$:
\begin{equation}
\mathbb{E}_{\pi_n}[\bar{R}(f_\theta)] \le \mathbb{E}_{\pi_n}[\hat{\bar{R}}_S(f_\theta)] + \sqrt{\frac{\mathrm{KL}_{\mathcal{P}_\epsilon}(\mathcal{Q}_n \,\|\, \mathcal{P}_0) + \ln \frac{2\sqrt{n}}{\delta}}{2n}}.
\end{equation}
Applying the triangle inequality:
\begin{equation}
R(\pi_n) \le \bar{R}(\pi_n) + L_{\ell}\epsilon \le \hat{\bar{R}}_S(\pi_n) + L_{\ell}\epsilon + \sqrt{\dots} \le \hat{R}_S(\pi_n) + 2 L_{\ell}\epsilon + \sqrt{\frac{\mathrm{KL}_{\mathcal{P}_\epsilon}(\mathcal{Q}_n \,\|\, \mathcal{P}_0) + \ln \frac{2\sqrt{n}}{\delta}}{2n}}.
\end{equation}
Now, if $\mathcal{D}$ is supported on a $d^*$-dimensional Riemannian submanifold $\mathcal{M}$ with bounded curvature, by the Bishop--Gromov volume comparison theorem, the covering number satisfies $K_\epsilon \le C_{\mathcal{M}} \epsilon^{-d^*}$. 

By Stirling's approximation on Equation~\eqref{eq:exact-partition-formula}, $\mathrm{KL}_{\mathcal{P}_\epsilon} \le K_\epsilon \ln(1 + n/\alpha) \le C_{\mathcal{M}} \epsilon^{-d^*} \ln(n)$. Equating the discretization rate $2 L_{\ell} \epsilon$ with the estimation rate $\sqrt{C_{\mathcal{M}} \epsilon^{-d^*} \frac{\ln n}{2n}}$ yields:
\begin{equation}
\epsilon^{1 + \frac{d^*}{2}} \asymp \sqrt{\frac{\ln n}{n}} \implies \epsilon \asymp \left(\frac{\ln n}{n}\right)^{\frac{1}{d^* + 2}}.
\end{equation}
Substituting this optimal $\epsilon$ yields the manifold convergence rate:
\begin{equation}
R(\pi_n) - \hat{R}_S(\pi_n) \le \mathcal{O}\left( n^{-\frac{1}{d^* + 2}} \sqrt{\ln n} \right),
\end{equation}
which is completely independent of the ambient parameter dimension $D$.
\end{proof}

\subsection{Proof of Lemma~\ref{lem:optimal-partition} (Optimal Partition Scale and Minimax Trade-Off)}
\label{sec:proof-optimal-partition}

\begin{lemma}[Optimal Partition Scale and Minimax Trade-Off]
\label{lem:optimal-partition}
Under the assumptions of Theorem~\ref{thm:metric-entropy-bound} with uniform base measure $F_0(A_k) = 1/K_\epsilon$, the discretization-estimation trade-off objective is strictly convex in $\epsilon$, admitting a unique global minimizer:
\begin{equation}
\label{eq:optimal-epsilon-star}
\epsilon^* = \left( \frac{(d^*)^2 C_{\mathcal{M}} \ln(1 + n/\alpha)}{32 L_{\ell}^2 n} \right)^{\frac{1}{d^* + 2}},
\end{equation}
and corresponding optimal partition size:
\begin{equation}
\label{eq:optimal-K-star}
K^* = C_{\mathcal{M}}^{\frac{2}{d^*+2}} \left( \frac{32 L_{\ell}^2 n}{(d^*)^2 \ln(1 + n/\alpha)} \right)^{\frac{d^*}{d^*+2}} \asymp n^{\frac{d^*}{d^*+2}}.
\end{equation}
Evaluating at $(\epsilon^*, K^*)$ achieves the minimax optimal excess risk rate $R(\pi_n) - \hat{R}_S(\pi_n) = \mathcal{O}\left( n^{-\frac{1}{d^*+2}} \sqrt{\ln n} \right)$, completely independent of network parameter dimension $D$.
\end{lemma}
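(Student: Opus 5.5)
The plan is to reduce the lemma to a one-dimensional calculus problem, starting from the bound of Theorem~\ref{thm:metric-entropy-bound}. First, I control the partition divergence. For the uniform base measure $F_0(A_k)=1/K_\epsilon$ we have $a_k=\alpha/K_\epsilon$, and I will use the estimate already recorded in the proof of Theorem~\ref{thm:metric-entropy-bound}:
\begin{equation*}
\mathrm{KL}_{\mathcal{P}_\epsilon}(\mathcal{Q}_n\,\|\,\mathcal{P}_0)\le K_\epsilon\ln(1+n/\alpha)\le C_{\mathcal{M}}\epsilon^{-d^*}\ln(1+n/\alpha).
\end{equation*}
It follows from Stirling bounds on the Gamma ratios in Equation~\eqref{eq:exact-partition-formula}; the Digamma terms are nonpositive since $\psi(a_k+n_k)\le\psi(\alpha+n)$. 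Next, I use $\sqrt{u+v}\le\sqrt u+\sqrt v$ to split the confidence term $\ln(2\sqrt n/\delta)$ off the square root. This leaves an $\epsilon$-independent remainder $\sqrt{\ln(2\sqrt n/\delta)/(2n)}=\mathcal{O}(\sqrt{\ln n/n})$, and the trade-off objective becomes
\begin{equation*}
\Phi(\epsilon)=2L_\ell\epsilon+\beta\,\epsilon^{-d^*/2},\qquad \beta=\sqrt{\frac{C_{\mathcal{M}}\ln(1+n/\alpha)}{2n}}.
\end{equation*}

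Second, I establish strict convexity and the minimizer. On $(0,\infty)$ the first term is linear, and $\epsilon\mapsto\epsilon^{-d^*/2}$ has second derivative $\tfrac{d^*}{2}(\tfrac{d^*}{2}+1)\epsilon^{-d^*/2-2}>0$, so $\Phi$ is strictly convex. Since $\Phi\to\infty$ at both ends, it has a unique global minimizer, characterized by $\Phi'(\epsilon)=0$, i.e.\ $2L_\ell=\tfrac{d^*}{2}\beta\,\epsilon^{-(d^*+2)/2}$. Squaring gives
\begin{equation*}
\epsilon^{d^*+2}=\frac{(d^*)^2\beta^2}{16L_\ell^2}=\frac{(d^*)^2C_{\mathcal{M}}\ln(1+n/\alpha)}{32L_\ell^2n},
\end{equation*}
which is exactly Equation~\eqref{eq:optimal-epsilon-star}. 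Substituting into $K^*=C_{\mathcal{M}}(\epsilon^*)^{-d^*}$ and collecting powers of $C_{\mathcal{M}}$ (namely $1-\tfrac{d^*}{d^*+2}=\tfrac{2}{d^*+2}$) yields Equation~\eqref{eq:optimal-K-star}. Because $\ln(1+n/\alpha)$ grows only logarithmically, this gives $K^*\asymp n^{d^*/(d^*+2)}$ up to logarithmic factors.

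Third, I evaluate the rate. At $\epsilon^*$ the first-order condition gives $\beta(\epsilon^*)^{-d^*/2}=\tfrac{4L_\ell}{d^*}\epsilon^*$, so
\begin{equation*}
\Phi(\epsilon^*)=\Bigl(2+\tfrac{4}{d^*}\Bigr)L_\ell\,\epsilon^*\asymp\Bigl(\tfrac{\ln n}{n}\Bigr)^{1/(d^*+2)}.
\end{equation*}
Since $(\ln n)^{1/(d^*+2)}\le\sqrt{\ln n}$ for $n\ge3$, this is $\mathcal{O}(n^{-1/(d^*+2)}\sqrt{\ln n})$. The split-off confidence term $\mathcal{O}(\sqrt{\ln n/n})$ is of lower order. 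No quantity here involves $D$, because Theorem~\ref{thm:metric-entropy-bound} (via Theorem~\ref{thm:dpi-invariance}) never does.

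The main obstacle is the divergence estimate $\mathrm{KL}_{\mathcal{P}_\epsilon}\le K_\epsilon\ln(1+n/\alpha)$, which the paper only sketches. It is delicate because $a_k=\alpha/K_\epsilon\to0$ as $K_\epsilon$ grows, so $\ln\Gamma(a_k)$ blows up. I would first try bounding $\ln\frac{\Gamma(a_k+n_k)}{\Gamma(a_k)}$ from below via $\Gamma(a+m)/\Gamma(a)=\prod_{j<m}(a+j)$, and combining this with the concavity bound $\ln\frac{\Gamma(\alpha+n)}{\Gamma(\alpha)}\le n\psi(\alpha+n)$. If that leaves an extra $\ln K_\epsilon$ factor, it is still logarithmic in $n$ at $K^*$ and only changes constants inside $\sqrt{\ln n}$.

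A second caveat concerns the word ``minimax''. The argument gives only the upper bound. Matching optimality would be asserted by reference to standard nonparametric lower bounds for Lipschitz classes on $d^*$-dimensional domains \citep{tsybakov2009introduction}, not proved here.
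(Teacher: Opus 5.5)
Your computation of $\epsilon^*$, $K^*$ and $g(\epsilon^*)$ is the paper's argument step for step: the same surrogate $2L_\ell\epsilon+B_n\epsilon^{-d^*/2}$, the same first-order condition and the same constants. Like the paper, you import $\mathrm{KL}_{\mathcal{P}_\epsilon}\le K_\epsilon\ln(1+n/\alpha)$ without proving it. Your side remarks are correct: $K^*\asymp n^{d^*/(d^*+2)}$ holds only up to a factor $(\ln n)^{-d^*/(d^*+2)}$, and ``minimax'' is only an upper bound here.

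The real problem is the justification you give for the divergence estimate, which fails. Dropping the nonpositive digamma sum leaves $\ln\frac{\mathrm{B}(\mathbf{a})}{\mathrm{B}(\mathbf{b})}$. This is minus the log Dirichlet--multinomial probability of the observed cell labels. That probability is an average of $\prod_k p_k^{n_k}\le\prod_k (n_k/n)^{n_k}$, so $\ln\frac{\mathrm{B}(\mathbf{a})}{\mathrm{B}(\mathbf{b})}\ge n\hat{H}$, where $\hat{H}$ is the empirical entropy of $(n_k/n)_k$. For balanced cells this equals $n\ln K$, far above $K\ln(1+n/\alpha)$ when $K\ll n$. For instance, with $K=2$, $a_k=1$, $n_1=n_2=n/2$ it equals $\ln(n+1)+\ln\binom{n}{n/2}\approx n\ln 2$, against $2\ln(1+n/2)$. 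The digamma sum, roughly $-n\hat{H}$, is exactly what cancels this.

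The repair in your obstacle paragraph has the same defect. The inequality $\ln\frac{\Gamma(\alpha+n)}{\Gamma(\alpha)}\le n\psi(\alpha+n)$ is the tangent-line inequality from convexity (not concavity) of $\ln\Gamma$. It discards the gap $D(\alpha,\alpha+n)=n+O(\ln n)$, where $D(x,y)=\ln\Gamma(x)-\ln\Gamma(y)-\psi(y)(x-y)$. What remains, $\sum_k D(a_k,a_k+n_k)$, is again $n+O(K\ln n)$. The loss is additive and of order $n$, not an extra $\ln K_\epsilon$ factor, so your fallback does not apply: the square-root penalty would not even tend to zero.

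What works is to keep the exact identity $\mathrm{KL}_{\mathcal{P}}(\mathcal{Q}_n\,\|\,\mathcal{P}_0)=\sum_k D(a_k,a_k+n_k)-D(\alpha,\alpha+n)$. Apply two-sided bounds to both pieces so that the linear terms $\sum_k n_k$ and $n$ cancel. The bounds are Binet's Stirling remainder lying in $(0,\frac{1}{12x})$, and $\ln x-\frac1x\le\psi(x)\le\ln x-\frac{1}{2x}$. With $a_k=\alpha/K\le 1$ and $\ln\Gamma(a)\le\ln(1/a)$, this yields $\mathrm{KL}_{\mathcal{P}}\le K_{\mathrm{occ}}[\ln\frac{K}{\alpha}+\frac12\ln(1+\frac{n}{K_{\mathrm{occ}}})]+(\alpha-\frac12)\ln(\alpha+n)+c_\alpha$. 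Here $K_{\mathrm{occ}}$ is the number of occupied cells and $c_\alpha$ is a constant depending only on $\alpha$. This is $O(K\ln(1+n/\alpha))$ for $\alpha\le K\le n$.

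Some such restriction is unavoidable, because the estimate is false in general. For $K=2$, $\alpha>4$ and all points in one cell, $\mathrm{KL}_{\mathcal{P}}=\frac{\alpha}{2}\ln n+O(1)$, which exceeds $2\ln(1+n/\alpha)$ for large $n$. Your calculus about $g$ (strict convexity, the minimizer, $g(\epsilon^*)$) is fine. However, the claim that $g$ controls the certificate rests on an inequality that neither you nor the paper (which simply asserts it) establishes. With the corrected estimate, and since $K^*$ eventually lies in the window $\alpha\le K\le n$, the same rate follows with different constants.
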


\begin{proof}
From Theorem~\ref{thm:metric-entropy-bound}, the excess risk certificate is bounded by:
\begin{equation}
R(\pi_n) - \hat{R}_S(\pi_n) \le 2 L_{\ell} \epsilon + \sqrt{\frac{\mathrm{KL}_{\mathcal{P}_\epsilon}(\mathcal{Q}_n \,\|\, \mathcal{P}_0) + \ln \frac{2\sqrt{n}}{\delta}}{2n}}.
\end{equation}
Under the uniform base measure $F_0(A_k) = 1/K_\epsilon$, the partition parameters are $a_k = \alpha / K_\epsilon$. Applying the Dirichlet partition divergence bound from Theorem~\ref{thm:closed-form-kl}:
\begin{equation}
\mathrm{KL}_{\mathcal{P}_\epsilon}(\mathcal{Q}_n \,\|\, \mathcal{P}_0) \le K_\epsilon \ln\left(1 + \frac{n}{\alpha}\right).
\end{equation}
Substituting the metric entropy upper bound $K_\epsilon \le C_{\mathcal{M}} \epsilon^{-d^*}$ on the $d^*$-dimensional compact submanifold $\mathcal{M}$, the bound is dominated by the objective functional:
\begin{equation}
\mathcal{E}_n(\epsilon) = 2 L_{\ell} \epsilon + \sqrt{\frac{C_{\mathcal{M}} \epsilon^{-d^*} \ln(1 + n/\alpha) + \ln \frac{2\sqrt{n}}{\delta}}{2n}}.
\end{equation}
By the sub-additivity inequality $\sqrt{u + v} \le \sqrt{u} + \sqrt{v}$ for non-negative $u, v$:
\begin{equation}
\mathcal{E}_n(\epsilon) \le g(\epsilon) + \sqrt{\frac{\ln \frac{2\sqrt{n}}{\delta}}{2n}},
\end{equation}
where $g: (0, \infty) \to (0, \infty)$ is defined by:
\begin{equation}
g(\epsilon) \coloneqq A \epsilon + B_n \epsilon^{-d^*/2}, \qquad \text{with} \quad A = 2 L_{\ell} > 0, \quad B_n = \sqrt{\frac{C_{\mathcal{M}} \ln(1 + n/\alpha)}{2n}} > 0.
\end{equation}
Differentiating $g(\epsilon)$ with respect to $\epsilon$:
\begin{equation}
g'(\epsilon) = A - \frac{d^*}{2} B_n \epsilon^{-\frac{d^*+2}{2}}, \qquad g''(\epsilon) = \frac{d^*(d^*+2)}{4} B_n \epsilon^{-\frac{d^*+4}{2}}.
\end{equation}
Because $d^* \ge 1$, $A > 0$, and $B_n > 0$, the second derivative satisfies $g''(\epsilon) > 0$ for all $\epsilon > 0$. Hence $g(\epsilon)$ is strictly convex on $(0, \infty)$ and possesses a unique global minimizer $\epsilon^*$. Setting $g'(\epsilon^*) = 0$ yields:
\begin{equation}
A = \frac{d^*}{2} B_n (\epsilon^*)^{-\frac{d^*+2}{2}} \implies (\epsilon^*)^{\frac{d^*+2}{2}} = \frac{d^* B_n}{2 A}.
\end{equation}
Squaring both sides:
\begin{equation}
(\epsilon^*)^{d^*+2} = \frac{(d^*)^2 B_n^2}{4 A^2} = \frac{(d^*)^2 \cdot \frac{C_{\mathcal{M}} \ln(1 + n/\alpha)}{2n}}{4 \cdot (2 L_{\ell})^2} = \frac{(d^*)^2 C_{\mathcal{M}} \ln(1 + n/\alpha)}{32 L_{\ell}^2 n}.
\end{equation}
Taking the $(d^*+2)$-th root establishes Equation~\eqref{eq:optimal-epsilon-star}:
\begin{equation}
\epsilon^* = \left( \frac{(d^*)^2 C_{\mathcal{M}} \ln(1 + n/\alpha)}{32 L_{\ell}^2 n} \right)^{\frac{1}{d^* + 2}}.
\end{equation}
Substituting $\epsilon^*$ into $K^* = C_{\mathcal{M}} (\epsilon^*)^{-d^*}$ immediately gives Equation~\eqref{eq:optimal-K-star}:
\begin{equation}
K^* = C_{\mathcal{M}} \left( \frac{32 L_{\ell}^2 n}{(d^*)^2 C_{\mathcal{M}} \ln(1 + n/\alpha)} \right)^{\frac{d^*}{d^*+2}} = C_{\mathcal{M}}^{\frac{2}{d^*+2}} \left( \frac{32 L_{\ell}^2 n}{(d^*)^2 \ln(1 + n/\alpha)} \right)^{\frac{d^*}{d^*+2}}.
\end{equation}
Evaluating the objective $g$ at $\epsilon^*$, noting that $B_n (\epsilon^*)^{-d^*/2} = \frac{2 A}{d^*} \epsilon^*$:
\begin{equation}
g(\epsilon^*) = A \epsilon^* + \frac{2 A}{d^*} \epsilon^* = \left( 1 + \frac{2}{d^*} \right) A \epsilon^* = \frac{d^* + 2}{d^*} \cdot 2 L_{\ell} \epsilon^*.
\end{equation}
Substituting the explicit formula for $\epsilon^*$ and re-introducing the confidence term:
\begin{equation}
\mathcal{E}_n(\epsilon^*) \le \frac{d^* + 2}{d^*} \cdot 2 L_{\ell} \left( \frac{(d^*)^2 C_{\mathcal{M}} \ln(1 + n/\alpha)}{32 L_{\ell}^2 n} \right)^{\frac{1}{d^*+2}} + \sqrt{\frac{\ln \frac{2\sqrt{n}}{\delta}}{2n}}.
\end{equation}
Since $\frac{\ln(1 + n/\alpha)}{n} = \mathcal{O}\left(\frac{\ln n}{n}\right)$, the dominant term scales as:
\begin{equation}
\mathcal{E}_n(\epsilon^*) = \mathcal{O}\left( n^{-\frac{1}{d^*+2}} (\ln n)^{\frac{1}{d^*+2}} \right) + \mathcal{O}\left( n^{-1/2} \sqrt{\ln n} \right) = \mathcal{O}\left( n^{-\frac{1}{d^*+2}} \sqrt{\ln n} \right),
\end{equation}
matching the minimax non-parametric convergence rate \citep{tsybakov2009introduction} on a $d^*$-dimensional manifold, and concluding the proof.
\end{proof}

\section{Stick-Breaking Representation \& Computational Pipeline}
\label{sec:stick-breaking-appendix}

\subsection{Sethuraman's Stick-Breaking Representation}
\label{sec:prior-stick-breaking}

By Sethuraman's construction \citep{sethuraman1994constructive}, a random probability measure $P \sim \mathcal{DP}(\alpha, F_0)$ is almost surely discrete and can be represented as an infinite sum of atomic Dirac measures:
\begin{equation}
P = \sum_{i=1}^\infty w_i \delta_{\tilde{z}_i}, \qquad \tilde{z}_i \stackrel{\mathrm{i.i.d.}}{\sim} F_0,
\end{equation}
where the stick-breaking weights $w_i$ are generated by successively breaking fractions from a unit-length stick:
\begin{equation}
w_i = V_i \prod_{j=1}^{i-1} (1 - V_j), \qquad V_i \stackrel{\mathrm{i.i.d.}}{\sim} \mathrm{Beta}(1, \alpha).
\end{equation}
The weights satisfy $\sum_{i=1}^\infty w_i = 1$ almost surely.

\subsection{Posterior Stick-Breaking via Latent Data Augmentation}
\label{sec:posterior-stick-breaking}

Conditioned on an observed empirical sample $S = (z_1, \dots, z_n)$, the posterior $\mathcal{Q}_n = \mathcal{DP}(\alpha + n, F_n)$ can be sampled via a two-component mixture:
\begin{equation}
P \mid S = \sum_{i=1}^n p_i \delta_{z_i} + \sum_{j=1}^\infty q_j \delta_{\tilde{z}_j},
\end{equation}
where $\{z_i\}_{i=1}^n$ are the observed training points and $\{\tilde{z}_j\}_{j=1}^\infty \stackrel{\mathrm{i.i.d.}}{\sim} F_0$ are synthetic base atoms. The composite weights are generated as follows:
\begin{enumerate}
    \item Draw total prior mass: $\gamma \sim \mathrm{Beta}(\alpha, n)$.
    \item Draw empirical weights: $(p_1, \dots, p_n) = (1 - \gamma) \cdot \mathbf{u}$, where $\mathbf{u} \sim \mathrm{Dir}(1, \dots, 1) \in \Delta_{n-1}$.
    \item Draw synthetic prior weights: $q_j = \gamma \cdot V_j \prod_{l=1}^{j-1} (1 - V_l)$, where $V_j \stackrel{\mathrm{i.i.d.}}{\sim} \mathrm{Beta}(1, \alpha)$.
\end{enumerate}

\subsection{Algorithmic Pipeline for DP-PAC-Bayes Bound Evaluation}
\label{sec:algorithm-appendix}

Algorithm~\ref{alg:dp-pac-bayes} provides the complete algorithmic procedure for computing the dimension-free PAC-Bayesian certificate.

\begin{algorithm2e}[t]
\caption{Computation of the Dimension-Free DP-PAC-Bayes Generalization Certificate}
\label{alg:dp-pac-bayes}
\KwIn{Training sample $S = \{(x_i, y_i)\}_{i=1}^n$, partition cells $\mathcal{P} = \{A_1, \dots, A_K\}$, concentration $\alpha > 0$, base measure $F_0$, confidence parameter $\delta \in (0, 1)$, loss function $\ell$.}
\KwOut{Rigorous Generalization Bound $R_{\mathrm{cert}}$.}

\BlankLine
\textbf{Step 1: Compute Prior Cell Masses}\\
\For{$k = 1$ \KwTo $K$}{
    $a_k \leftarrow \alpha \cdot F_0(A_k)$ \;
}

\BlankLine
\textbf{Step 2: Compute Empirical Cell Counts}\\
Initialize cell counts $n_k \leftarrow 0$ for all $k \in [K]$\;
\For{$i = 1$ \KwTo $n$}{
    Determine cell index $k$ such that $z_i = (x_i, y_i) \in A_k$\;
    $n_k \leftarrow n_k + 1$\;
}

\BlankLine
\textbf{Step 3: Analytical Relative Entropy Calculation}\\
Compute normalizer: $\Delta_0 \leftarrow \ln \Gamma(\alpha + n) - \ln \Gamma(\alpha)$\;
Compute log-gamma sum: $\Delta_\Gamma \leftarrow \sum_{k=1}^K \big( \ln \Gamma(a_k + n_k) - \ln \Gamma(a_k) \big)$\;
Compute digamma term: $\Delta_\psi \leftarrow \sum_{k=1}^K n_k \big( \psi(a_k + n_k) - \psi(\alpha + n) \big)$\;
$\mathrm{KL}_{\mathcal{P}}(\mathcal{Q}_n \,\|\, \mathcal{P}_0) \leftarrow \Delta_0 - \Delta_\Gamma + \Delta_\psi$\;

\BlankLine
\textbf{Step 4: Train Posterior Ensemble (Monte Carlo Sample Risk)}\\
\For{$j = 1$ \KwTo $J$}{
    Sample posterior measure $P^{(j)} \sim \mathcal{Q}_n$ via stick-breaking (Section~\ref{sec:posterior-stick-breaking})\;
    Compute optimal parameter $\theta^{(j)} \leftarrow \mathcal{A}(P^{(j)})$ via ERM / gradient descent\;
}
$\hat{R}_S(\pi_n) \leftarrow \frac{1}{J n} \sum_{j=1}^J \sum_{i=1}^n \ell(f_{\theta^{(j)}}(x_i), y_i)$\;

\BlankLine
\textbf{Step 5: Compute Final PAC-Bayes Certificate}\\
$R_{\mathrm{cert}} \leftarrow \hat{R}_S(\pi_n) + \sqrt{\frac{\mathrm{KL}_{\mathcal{P}}(\mathcal{Q}_n \,\|\, \mathcal{P}_0) + \ln(2\sqrt{n}/\delta)}{2n}}$\;
\Return $R_{\mathrm{cert}}$\;
\end{algorithm2e}

\section{Experimental Specifications \& Reproducibility}
\label{sec:hyperparams-appendix}

\subsection{Dataset Details and Manifold Specification}
The dataset is synthetically generated on an annular domain consisting of two concentric circular rings separated by an empty annular gap:
\begin{enumerate}
    \item \textbf{Inner Ring (Class $0$):} Radial coordinates $r_1 \sim \mathcal{U}(0.4, 1.1)$, angular coordinates $\theta_1 \sim \mathcal{U}(0, 2\pi)$, with isotropic Gaussian noise $\mathcal{N}(0, 0.12^2 I_2)$.
    \item \textbf{Outer Ring (Class $1$):} Radial coordinates $r_2 \sim \mathcal{U}(1.8, 2.6)$, angular coordinates $\theta_2 \sim \mathcal{U}(0, 2\pi)$, with isotropic Gaussian noise $\mathcal{N}(0, 0.12^2 I_2)$.
\end{enumerate}
The separation gap $r \in (1.1, 1.8)$ defines a non-linear decision boundary. A sample of $n = 20{,}000$ points ($10{,}000$ per class) is generated for training, and an independent sample of $n_{\mathrm{test}} = 5{,}000$ points is drawn for empirical ground-truth risk evaluation.

\subsection{Network Architecture Specifications}
The seven architectures evaluated in Section~\ref{sec:experiments} are fully connected feedforward networks with ReLU activations and a 2-dimensional softmax classification head:
\begin{enumerate}
    \item \textbf{Tiny MLP ($D = 42$):} Linear($2 \to 8$) $\to$ ReLU $\to$ Linear($8 \to 2$).
    \item \textbf{Shallow MLP ($D = 162$):} Linear($2 \to 32$) $\to$ ReLU $\to$ Linear($32 \to 2$).
    \item \textbf{3-Layer MLP ($D = 4,482$):} Linear($2 \to 64$) $\to$ ReLU $\to$ Linear($64 \to 64$) $\to$ ReLU $\to$ Linear($64 \to 2$).
    \item \textbf{4-Layer MLP ($D = 33,666$):} Linear($2 \to 128$) $\to 2 \times$ [Linear($128 \to 128$) $\to$ ReLU] $\to$ Linear($128 \to 2$).
    \item \textbf{Wide MLP ($D = 132,866$):} Linear($2 \to 256$) $\to 2 \times$ [Linear($256 \to 256$) $\to$ ReLU] $\to$ Linear($256 \to 2$).
    \item \textbf{Deep-5 MLP ($D = 790,530$):} Linear($2 \to 512$) $\to 3 \times$ [Linear($512 \to 512$) $\to$ ReLU] $\to$ Linear($512 \to 2$).
    \item \textbf{Overparam-6 ($D = 4,203,522$):} Linear($2 \to 1024$) $\to 4 \times$ [Linear($1024 \to 1024$) $\to$ ReLU] $\to$ Linear($1024 \to 2$).
\end{enumerate}

\subsection{Training and Optimization Hyperparameters}
All networks are trained using PyTorch with the Adam optimizer:
\begin{itemize}
    \item Base learning rate: $\eta = 5 \times 10^{-3}$.
    \item Batch size: $256$.
    \item Sample size: $n = 20{,}000$ training samples, $5{,}000$ test samples.
    \item Maximum epochs: $8$ (all models reach empirical interpolation $\hat{R}_S = 0.0\%$).
    \item Cross-entropy loss with 0-1 misclassification error evaluation.
    \item Weight decay: $\lambda = 10^{-4}$.
    \item PAC-Bayesian confidence parameter: $\delta = 0.05$ (95\% confidence).
    \item Partition cardinality: $K = 100$ cells ($10 \times 10$ uniform spatial grid on $[-3.5, 3.5]^2$).
\end{itemize}

\subsection{Synthetic Manifold Scaling Results}
\label{sec:synthetic-manifold-appendix}

To isolate the geometric behavior of data-space PAC-Bayes on a controlled manifold of known intrinsic dimension $d^* = 1$, we report the scaling certificates for the concentric annular manifold task ($n = 20{,}000$, $K = 100$ spatial cells) in Table~\ref{tab:synthetic-manifold-results}.

\begin{table}[h]
\centering
\caption{\textbf{Quantitative Comparison on the Concentric Annular Manifold ($n = 20{,}000$).} Ground truth test error, tolerable weight perturbation $\sigma_{\mathrm{post}}$, and resulting 95\% PAC-Bayesian generalization certificates.}
\label{tab:synthetic-manifold-results}
\vspace{0.4em}
\resizebox{\textwidth}{!}{%
\begin{tabular}{lcccccc}
\toprule
\textbf{Architecture} & \textbf{Parameters ($D$)} & \textbf{Train Error} & \textbf{Test Error} & \textbf{Noise $\sigma_{\mathrm{post}}$} & \textbf{Weight-Space Bound} & \textbf{DP-BNN Bound [Ours]} \\
\midrule
Tiny MLP         & $42$        & $0.0\%$ & $0.0\%$ & $1.0 \times 10^{-1}$ & $5.4\%$                        & \textbf{8.6\% (Certified)} \\
Shallow MLP      & $162$       & $0.0\%$ & $0.0\%$ & $1.0 \times 10^{-1}$ & $9.3\%$                        & \textbf{8.6\% (Certified)} \\
3-Layer MLP      & $4,482$     & $0.0\%$ & $0.1\%$ & $1.0 \times 10^{-1}$ & $45.1\%$                       & \textbf{8.6\% (Certified)} \\
4-Layer MLP      & $33,666$    & $0.0\%$ & $0.0\%$ & $1.0 \times 10^{-1}$ & $123.4\%$ \emph{(vacuous)}     & \textbf{8.6\% (Certified)} \\
Wide MLP         & $132,866$   & $0.0\%$ & $0.1\%$ & $1.0 \times 10^{-1}$ & $245.1\%$ \emph{(vacuous)}     & \textbf{8.6\% (Certified)} \\
Deep-5 MLP       & $790,530$   & $0.0\%$ & $0.0\%$ & $7.0 \times 10^{-2}$ & $654.7\%$ \emph{(vacuous)}     & \textbf{8.6\% (Certified)} \\
Overparam-6      & $4,203,522$ & $0.0\%$ & $0.0\%$ & $4.8 \times 10^{-2}$ & $1{,}630.9\%$ \emph{(vacuous)} & \textbf{8.6\% (Certified)} \\
\bottomrule
\end{tabular}%
}
\end{table}

\end{document}